\def\eqref#1{equation~\ref{#1}}
\def\1{\bm{1}}
\DeclareMathAlphabet{\mathsfit}{\encodingdefault}{\sfdefault}{m}{sl}
\SetMathAlphabet{\mathsfit}{bold}{\encodingdefault}{\sfdefault}{bx}{n}
\def\gE{{\mathcal{E}}}
\title{Understanding Expressivity of GNN in Rule Learning}
\author{Haiquan Qiu$^1$, Yongqi Zhang$^2$, Yong Li$^1$, Quanming Yao$^1$\footnotemark[1] \\
$^1$Department of Electronic Engineering, Tsinghua University\\
$^2$The Hong Kong University of Science and Technology (Guangzhou) \\
\url{qyaoaa@tsinghua.edu.cn} \\
}
\theoremstyle{plain}
\newtheorem{theorem}{Theorem}[section]
\newtheorem{proposition}[theorem]{Proposition}
\newtheorem{lemma}[theorem]{Lemma}
\newtheorem{corollary}[theorem]{Corollary}
\theoremstyle{definition}
\newtheorem{definition}[theorem]{Definition}
\theoremstyle{remark}
\newtheorem{remark}[theorem]{Remark}
\newtheorem*{remark*}{Remark}
\newcommand{\cmark}{\ding{51}}%
\newcommand{\xmark}{\ding{55}}%
\begin{document}

\maketitle
\renewcommand{\thefootnote}{\fnsymbol{footnote}}
\footnotetext[1]{Quanming Yao is the corresponding author.}
\renewcommand{\thefootnote}{\arabic{footnote}}

\begin{abstract}
Rule learning is critical to improving knowledge graph (KG) reasoning due to their ability to provide logical and interpretable explanations.
Recently, Graph Neural Networks (GNNs) with tail entity scoring achieve the state-of-the-art performance on KG reasoning.
However, 
the theoretical understandings for
these GNNs are either lacking or focusing on single-relational graphs,
leaving what the kind of rules these GNNs can learn an open problem.
We propose to fill the above gap in this paper.
Specifically, GNNs with tail entity scoring are unified into a common framework. Then, we analyze their expressivity by formally describing the rule structures they can learn and theoretically demonstrating their superiority.
These results further inspire us to propose a novel labeling strategy 
to learn more rules in KG reasoning.
Experimental results are consistent with our theoretical findings and verify the effectiveness of our proposed method.
The code is publicly available at \url{https://github.com/LARS-research/Rule-learning-expressivity}.
\end{abstract}

\section{Introduction}

A knowledge graph~(KG)~\citep{battaglia2018relational, ji2021survey}
is a type of graph where edges represent multiple types of relationships between entities. 
These relationships can be of different types, such as friend, spouse, coworker, or parent-child, and each type of relationship is represented by a separate edge. 
By encapsulating the interactions among
entities, KGs provide a way for machines to understand and
process complex information.
KG reasoning refers to the task of deducing new facts from the existing facts in
KG.
This task is important because it helps in many real-world applications, such as recommendation systems~\citep{cao2019unifying} and drug discovery~\citep{Mohamed2019DiscoveringPD}.

With the success of graph neural networks (GNNs) in modeling graph-structured data, 
GNNs have been developed for KG reasoning in recent years.
Classical methods such as R-GCN~\citep{schlichtkrull2018modeling} and CompGCN~\citep{vashishth2019composition} are proposed for
KG reasoning by aggregating the representations of two end entities of a triplet.
And they are known to fail to distinguish the structural role of different neighbors.
GraIL~\citep{teru2020inductive} and RED-GNN~\citep{zhang2022knowledge} tackle this problem by encoding the subgraph around the target triplet. GraIL predicts a new triplet using the subgraph representations, while RED-GNN employs dynamic programming for efficient subgraph encoding.
Motivated by the effectiveness of heuristic metrics over paths between a link, NBFNet~\citep{zhu2021neural} proposes a neural network based on Bellman-Ford algorithm for KG reasoning.
AdaProp~\citep{zhang2023adaprop} and
A$^\star$Net~\citep{zhu2022scalable} enhance 
the scalability of RED-GNN and NBFNet respectively 
by selecting crucial nodes and edges iteratively.
Among these methods, NBFNet, RED-GNN and their variants score a triplet with its tail entity representation and achieve state-of-the-art (SOTA) performance on KG reasoning.
However, 
these methods are motivated by different heuristics, e.g., Bellman-Ford algorithm and enclosing subgraph encoding, which make the understanding of their effectiveness for KG reasoning difficult.

In this paper, inspired by the importance of rule learning in KG reasoning,
we propose to study expressivity of SOTA GNNs for KG reasoning by analyzing the kind of rules they can learn.
First, we unify SOTA GNNs for KG reasoning into 
a common framework called QL-GNN, based on the observation that they score a triplet with its tail entity representation and essentially extract rule structures from subgraphs with same pattern. Then, we analyze the logical expressivity of QL-GNN to study its ability of learning rule structures.
The analysis helps us reveal the underneath theoretical reasons that contribute to the empirical success of QL-GNN, elucidating their effectiveness over classical methods.
Specifically,
our analysis is based on the formal description of rule structures in graph,
which differs from previous analysis that relies on graph isomorphism testing~\citep{xu2018powerful,zhang2021labeling} and focuses on the expressivity of distinguishing various rules. 
The new analysis tool allows us to understand the rules learned by QL-GNN and reveals the maximum expressivity that QL-GNN can generalize through training.
Based on the new theory, we also uncover the deficiencies of QL-GNN in learning rule structures and we propose EL-GNN based on labeling trick as an improvement upon QL-GNN to improve its learning ability.
In summary, our paper has the following contributions:

\begin{itemize}[leftmargin=*]
\item Our work unifies state-of-the-art GNNs for KG reasoning into a common framework named QL-GNN, 
and analyzes their logical expressivity to study their ability of learning rule structures, explaining their superior performance over classical methods.

\item The logical expressivity of QL-GNN demonstrates its capability in learning a particular class of rule structures. Consequently, based on further theoretical analysis, we introduce EL-GNN, a novel GNN designed to learn rule structures that are beyond the learning capacity of QL-GNN.

\item Synthetic datasets are generated to evaluate the expressivity of various GNNs, whose experimental results are consistent with our theory. Also, results of the proposed labeling method show improved performance on real datasets.

\end{itemize}

\begin{figure*}[t]
	\centering
	\includegraphics[width=1.0\textwidth]{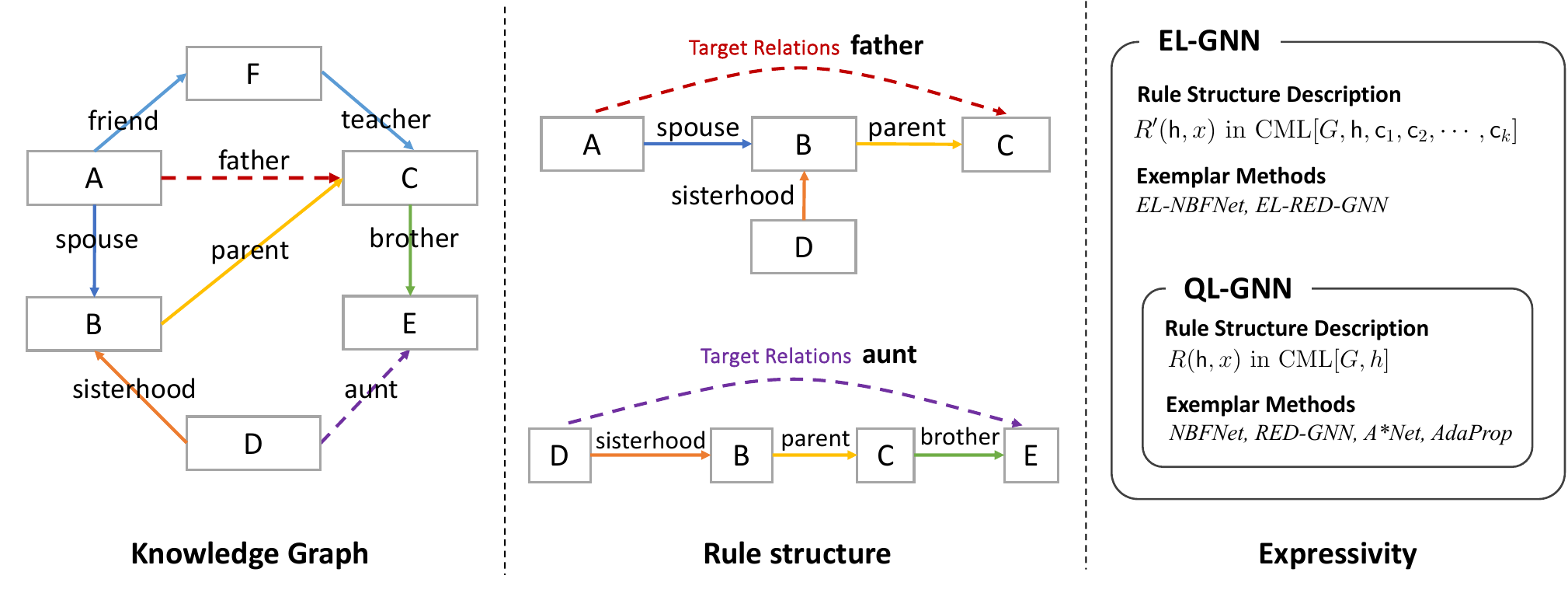}
	\caption{The existence of a triplet in KG is determined by the corresponding rule structure. 
		We investigates the kind of rule structures can be learned by SOTA GNNs for KG reasoning (i.e., QL-GNN), 
		and proposes EL-GNN, which can learn more rule structures compared to QL-GNN.}
	\label{fig:overview}
\end{figure*}

\section{A common framework for the state-of-the-art methods}\label{sec:cat}

To study the state-of-the-art GNNs for KG reasoning, we find that they (e.g., RED-GNN and NBFNet) essentially learn rule structures from GNN's tail entity representation which encodes subgraphs
with the same pattern, 
i.e., a subgraph with the query entity as the source node and the tail entity as the sink node. 
Based on this observation, we are motivated to derive a common framework for these SOTA methods 
and analyze their ability of learning rule structures with the derived framework.

Given a query $(h,R,?)$, the labeling trick of query entity $h$ ensures the SOTA methods to extract rules from a graph with the same pattern because it makes the query entity distinguishable among all entities in graph. Therefore, we unify NBFNet, RED-GNN and their variants 
to a common framework called Query Labeling (QL) GNN (see correspondence in Appdendix~\ref{app:sec:ql-gnn}).
For a query $(h, R, ?)$, QL-GNN first applies labeling trick by assigning special initial representation $\mathbf{e}_{h}^{(0)}$ to entity $h$, which make the query entity distinguishable from other entities.
Base on these initial features, QL-GNN aggregates entity representations with a $L$-layer message passing neural network (MPNN) for each candidate $t\in \mathcal V$.
MPNN's last layer representation of entity $t$ in QL-GNN is denoted as $\mathbf{e}_t^{(L)}[h]$ indicating its dependency on query entity $h$.
Finally, QL-GNN scores new facts $(h, R, t)$ with tail entity representation $\mathbf{e}_t^{(L)}[h]$.
For example, NBFNet uses the score function $s(h,R,t)=\text{FFN}(\mathbf{e}_t^{(L)}[h])$ for new triplet $(h,R,t)$ where $\text{FFN}(\cdot)$ denotes a feed-forward neural network.

Even RED-GNN, NBFNet and their variant may take the different MPNNs to calculate $\mathbf{e}_t^{(L)}[h]$, without loss of generality, their MPNNs can take the following form in QL-GNN (omit $[h]$ for simplicity):
\begin{align}
	\label{eq:GNN}
	\mathbf{e}_v^{(k)}
	=
	\delta 
	\Big( 
	\mathbf{e}_v^{(k-1)}, \phi
	\left(
	\big\{
	\{ \psi(\mathbf{e}_u^{(k-1)}, R) | 
	u \in \mathcal{N}_R(v), R\in \mathcal{R} 
	\}
	\big\}
	\right) 
	\Big),
\end{align}
where $\delta$ and $\phi$ are combination and aggregation functions respectively, 
$\psi$ is the message function encoding the relation $R$ and entity $u$ neighboring to $v$, 
$\{\{\cdots\}\}$ is a multiset, 
and 
$\mathcal{N}_R(v)$ is the neighboring entity set $\{ u | (u,R,v) \in \mathcal{E} \}$.

\section{Expressivity of QL-GNN}\label{sec:gnn-exp}

{In this section, we explore the logical expressivity of QL-GNN to analyze the types of rule structures QL-GNN can learn.} {First, we provide the logic to describe rules in KGs.}
Then, we analyze {logical expressivity} of QL-GNN using Theorem~\ref{theorem:QL} and Corollary~\ref{coro:QL}, formally demonstrating the kind of rule structures it can learn. Finally, we compare QL-GNN with classical methods and highlight its superior expressivity in KG reasoning.

\subsection{{Expressivity analysis with logic of rule structures}}\label{ssec:formal-description-of-rule-structures}

From previous works of rule mining on KG~\citep{yang2017differentiable,sadeghian2019drum}, rule structures are usually described as a formula in first-order logic.
We also follow this way to formally describe the rule structures in KG. Therefore, we have the following correspondence between the elements in rule structures and logic:

\begin{itemize}[leftmargin=*]
  \item Variable: variables denoted with lowercase italic letters $x,y,z$ represent entities in a KG;
  \item Unary predicate: unary predicate $P_i(x)$ is corresponding to the entity property $P_i$ in a KG, e.g., $\text{red}(x)$ denotes the color of an entity $x$ is red;
  \item Binary predicate: binary predicate $R_j(x,y)$ is corresponding to the relation $R_j$ in a KG, e.g., $\text{father}(x,y)$ denotes $x$ is the father of $y$;
  \item Constant: constant denoted with lowercase letters $\mathsf{h},\mathsf{c}$ with serif typestyle is the unique identifier of some entity in a KG.
\end{itemize}

Except from the above elements, the quantifier $\exists$ expresses the existence of entities satisfying a condition, $\forall$ expresses universal quantification, and $\exists^{\geq N}$ represents the existence of at least $N$ entities satisfying a condition. The logical connective $\wedge$ denotes conjunction, $\vee$ denotes disjunction, and $\top$ and $\bot$ represent true and false, respectively. Using these symbols, rule structures can be represented by describing their elements directly. 
For example, $C_{3}(x, y):= \exists z_1 z_2, R_1(x, z_1) \wedge R_2(z_1, z_2) \wedge R_3(z_2, y)$ in Figure~\ref{fig:example} describes a chain-like structure between $x$ and $y$ with three relations $R_1, R_2, R_3$.
Rule structures can be represented using the rule formula $R(x,y)$, and the existence of a rule structure for the triplet $(h,R,t)$ is equivalent to the satisfaction of the rule formula $R(x,y)$ at the entity pair $(h,t)$.
{In this paper, 
	\textbf{logical expressivity} of GNN is a measurement of the ability of GNN to learn logical formulas and is defined as the set of logical formulas that GNN can learn.
Therefore, since rule structures can be described by logical formulas, the logical expressivity of QL-GNN can determine their ability to learn rule structures in KG reasoning.}

\subsection{What kind of rule structures can QL-GNN learn?}

In this section, we analyze the logical expressivity of QL-GNN regarding what kind of rule structure it can learn.
Given a query $(h,R,?)$, 
we first have the following proposition about the rule formula describing a rule structure.

\begin{proposition}
\label{prop:QL}
The rule structure for query $(h,R,?)$ can be described with rule formula $R(x,y)$ or rule formula $R(\mathsf{h},x)$
\footnote{The rule formula $R(\mathsf{h}, x)$ is equivalent to $\exists z R(z,x)\wedge P_h(z)$ where $P_h(x)$ denotes the assignment of constant $\mathsf{h}$ to $x$ and is called constant predicate in our paper.}
where $\mathsf{h}$ is the logical constant assigned to query entity $h$.
\end{proposition}

QL-GNN applies labeling trick to the query entity $h$, which can be equivalently seen as assigning constant $\mathsf{h}$ to query entity $h$\footnote{The initial representation of an entity should be unique among all entities to be regarded as constant in logic. The initial representation assigned to query entity are indeed unique in NBFNet, RED-GNN and their variants.}.With Proposition~\ref{prop:QL} (proven in Appendix~\ref{app:sec:rule-analysis}), {the logical expressivity of QL-GNN can be analyzed by the types of rule formula $R(\mathsf{h}, x)$ it can learn.}
In this case, the rule structure of triplet $(h,R,t)$ exists if and only if the logical formula $R(\mathsf{h},x)$ is satisfied at entity $t$.

\subsubsection{Expressivity of QL-GNN}
Before presenting the {logical} expressivity of QL-GNN, we start by explaining how QL-GNN learns the rule formula $R(\mathsf{h}, x)$. Following the definition in \citet{barcelo2020logical}, we treat $R(\mathsf{h}, x)$ as a binary classifier. When given a candidate tail entity $t$, if the triplet $(h,R,t)$ exists in a KG, the binary classifier $R(\mathsf{h}, x)$ should output true; otherwise, it should output false.
If QL-GNN can learn the rule formula $R(\mathsf{h}, x)$, it implies that QL-GNN can estimate binary classifier $R(\mathsf{h}, x)$.
Consequently, if the rule formula $R(\mathsf{h}, x)$ is satisfied at entity $t$, the representation $\mathbf{e}_t^{(L)}[h]$ is mapped to a high probability value, indicating the existence of triplet $(h,R,t)$ in KG.
Conversely, when the rule formula is not satisfied at $t$, $\mathbf{e}_t^{(L)}[h]$ is mapped to a low probability value, indicating the absence of the triplet.

The rule structures that QL-GNN can learn are described by a family of logic called graded modal logic (CML)~\citep{de2000note,otto2019graded}.
CML is defined by recursion with the base elements $\top, \bot$, all unary predicates $P_i(x)$, and the recursion rule: if $\varphi(x), \varphi_1(x), \varphi_2(x)$ are formulas in CML, $\neg \varphi(x), \varphi_1(x) \wedge \varphi_2(x), \exists^{\geq N}y \left( R(y, x) \wedge \varphi(y) \right)$ are also formulas in CML. Since QL-GNN introduces a constant $\mathsf{h}$ to the query entity $h$, we use the notation $\text{CML}[G, \mathsf{h}]$ to denote the CML recursively built from base elements in $G$ and constant $\mathsf{h}$ (equivalent to constant predicate $P_h(x)$).
Then, the following theorem and corollary show the expressivity of QL-GNN for KG reasoning.

\begin{theorem}[Logical expressivity of QL-GNN]\label{theorem:QL}
For KG reasoning, given a query $(h, R, ?)$, 
a rule formula $R(\mathsf{h}, x)$ is learned by QL-GNN if and only if $R(\mathsf{h}, x)$ is a formula in $\text{CML}[G, \mathsf{h}]$.
\end{theorem}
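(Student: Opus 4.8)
The plan is to obtain Theorem~\ref{theorem:QL} as a direct specialization of Theorem~\ref{theorem:main}, since the only thing that distinguishes QL-GNN from the generic message-passing model \eqref{eq:GNN} is the query labeling and the final scoring, and both can be matched to the logical setting $\text{CML}[\mathcal{K},\mathsf{h}]$. Concretely, I would isolate three components of QL-GNN---(i) assigning a distinguished initial feature to the query entity $h$ and a common default feature to all other entities, (ii) running the recursion \eqref{eq:GNN} unchanged to produce $\mathbf{e}_t^L[\mathsf{h}]$, and (iii) scoring a candidate $t$ by $s(h,R,t)=\text{FFN}(\mathbf{e}_t^L[\mathsf{h}])$---and argue that (ii) is exactly the GNN of Theorem~\ref{theorem:main}, that (i) realizes precisely the constant predicate $P_h$, and that (iii) is expressiveness-neutral.

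For the reduction itself I would argue as follows. Singling out the query entity $h$ is, from the logical side, exactly the assignment of a constant $\mathsf{h}$ to $\mathcal{K}$; and by the Remark following Theorem~\ref{theorem:main}, giving $h$ a unique (e.g.\ one-hot) initial feature while all other entities share a default feature captures precisely the constant predicate $P_h(x)$, i.e.\ the formula satisfied only at $h$, and nothing stronger. Hence the atomic unary predicates available to the representations $\mathbf{e}_t^L[\mathsf{h}]$ are exactly those of $\text{CML}[\mathcal{K},\mathsf{h}]$, and the recursion producing $\mathbf{e}_t^L[\mathsf{h}]$ is an instance of the model to which Theorem~\ref{theorem:main} applies. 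Applying Theorem~\ref{theorem:main} with the single constant $\mathsf{h}$ then gives: a unary formula $\varphi(x)$ is captured by the representations $\mathbf{e}_t^L[\mathsf{h}]$, $t\in\mathcal V$, if and only if $\varphi(x)\in\text{CML}[\mathcal{K},\mathsf{h}]$. Taking $\varphi(x):=R(\mathsf{h},x)$, the logical rule behind the query $(h,R,?)$, yields the claim at the level of the representations.

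It then remains to upgrade ``captured by the representations $\mathbf{e}_t^L[\mathsf{h}]$'' to ``captured by QL-GNN'', i.e.\ to show the scoring step changes nothing. The ($\Rightarrow$) direction is immediate: $s(h,R,t)$ is a deterministic function of the single vector $\mathbf{e}_t^L[\mathsf{h}]$, so if the score determines whether $R(\mathsf{h},x)$ holds at $t$ then so does $\mathbf{e}_t^L[\mathsf{h}]$, and Theorem~\ref{theorem:main} forces $R(\mathsf{h},x)\in\text{CML}[\mathcal{K},\mathsf{h}]$. For ($\Leftarrow$), given $R(\mathsf{h},x)\in\text{CML}[\mathcal{K},\mathsf{h}]$, Theorem~\ref{theorem:main} supplies a GNN of the form \eqref{eq:GNN} whose representations carry a coordinate equal to $1$ exactly at entities satisfying $R(\mathsf{h},x)$; endowing it with the query-dependent initial features of component (i) turns it into a QL-GNN, and an FFN that reads off (and thresholds) that coordinate gives a score function that captures $R(\mathsf{h},x)$. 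The general point, worth stating once, is that an FFN operates on a single entity's vector and performs no message passing, so it can only rearrange already-captured information and can never push expressiveness beyond CML; this is what keeps the equivalence tight in both directions.

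The main obstacle is the bookkeeping in the middle step: verifying that ``label the query entity'' is a \emph{faithful} encoding of ``add the constant $\mathsf{h}$''---that the chosen initial features capture exactly $P_h$ rather than something weaker or stronger---and that the concrete initializations used by NBFNet and RED-GNN indeed fall under this description. Beyond that, the theorem is just Theorem~\ref{theorem:main} instantiated with one constant together with the neutrality of the FFN scoring; all the substantive content lives in Theorem~\ref{theorem:main} and its proof in Appendix~\ref{app:proof}.
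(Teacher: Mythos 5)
Your proposal is correct and follows essentially the same route as the paper: the paper's own proof is a two-line specialization of Theorem~\ref{theorem:main} to $\text{CML}[\mathcal{K},\mathsf{h}]$ with formulas restricted to the form $R(\mathsf{h},x)$, exactly as you propose. Your additional care about the faithfulness of the query-labeling encoding of $P_h$ and the expressiveness-neutrality of the FFN scoring step is more explicit than what the paper writes, but it fills in the same reduction rather than taking a different path.
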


\begin{corollary}\label{coro:QL}
The rule structures learned by QL-GNN can be constructed with the recursion:
\vspace{-8px}
\begin{itemize}[leftmargin=*]
  \item \textbf{Base case:} all unary predicates $P_i(x)$ can be learned by QL-GNN; the constant predicate $P_h(x)$ can be learned by QL-GNN;
  
  \vspace{-8px}
  \item \textbf{Recursion rule:} if the rule structures $R_1(\mathsf{h}, x), R_2(\mathsf{h}, x), R(\mathsf{h}, y)$ are learned by QL-GNN, $R_1(\mathsf{h}, x) \wedge R_2(\mathsf{h}, y)$, $\exists^{\geq N}y \left( R_i(y, x) \wedge R(\mathsf{h}, y) \right)$ are learned by QL-GNN.
\end{itemize}
\end{corollary}

Theorem~\ref{theorem:QL} (proved in Appendix~\ref{app:proof}) provides the {logical} expressivity of QL-GNN with rule formula $R(\mathsf{h}, x)$ in $\text{CML}[G,\mathsf{h}]$, which shows that querying labeling transforms $R(x,y)$ to $R(\mathsf{h}, x)$ and enable QL-GNN to learn the corresponding rule structure.
To gain a concrete understanding of the rule structures learned by QL-GNN, Corollary~\ref{coro:QL} provides the recursive definition for these rule structures.
{
Note that Theorem~\ref{theorem:QL} cannot be directly applied to analyze the expressivity of QL-GNN 
when learning more than one rule structures.
The ability of learning more than one rule structures relates to the capacity of QL-GNN, which we take as a future direction.
Theorem~\ref{theorem:QL} also reveals the maximum expressivity that QL-GNN can generalize through training, and its proof also provides some insights about the design QL-GNN with better generalization
(more discussions are provided in Appendix~\ref{app:sec:QL-generalization}).
}
Besides, 
our results in this section can be 
reduced to single relational-graph by restricting the relation type to a single relation type, 
and we give these results as corollaries in Appendix~\ref{app:sec:single}.

\subsubsection{Examples}\label{ssec:example}

We analyze several rule structures and their corresponding rule formulas in Figure~\ref{fig:example} as illustrative examples, 
demonstrating the application of our theory in analyzing the rule structures that QL-GNN can learn.
The real examples of these rule structures are shown in Figure~\ref{fig:overview}.
{In Appdendix~\ref{app:sec:rule-analysis}, we have detailed analysis of rule structures discussed in the paper and present some rules from real datasets.}

Chain-like rules, e.g., $C_3(x,y)$ in Figure~\ref{fig:example}, are basic rule structures investigated in many previous works~\citep{sadeghian2019drum,teru2020inductive,zhu2021neural}.
QL-GNN assigns constant $\mathsf{h}$ to query entity $h$, 
thus triplets with relation $C_3$ can be predicted by learning the rule formula $C_3(\mathsf{h}, x)$.
$C_3(\mathsf{h}, x)$ are formulas in $\text{CML}[G,\mathsf{h}]$ and can be recursively defined with rules in Corollary~\ref{coro:QL} (proven in Corollary~\ref{app:corollary:chain}). 
Therefore, our theory gives a general proof of QL-GNN's ability to learn chain-like structures.

\begin{wrapfigure}{f}{0.65\columnwidth}
	\centering
	\includegraphics[width=.65\columnwidth]{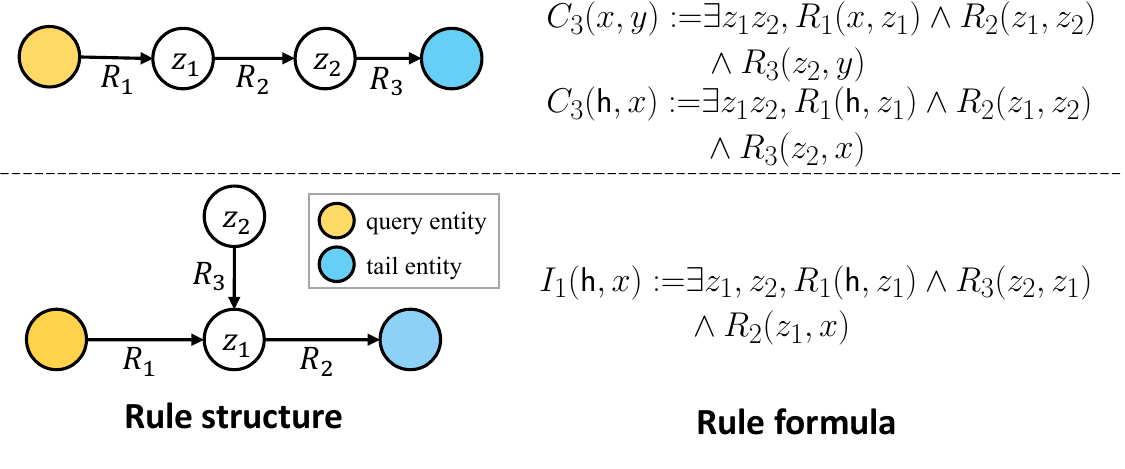}
   \vspace{-20px}
	\caption{Example of rule structures and their corresponding rule formulas QL-GNN can learn.}
   \vspace{-10px}
	\label{fig:example}
\end{wrapfigure}

The second type of rule structure $I_1(\mathsf{h}, x)$ in Figure~\ref{fig:example} is composed of a chain-like structure from query entity to tail entity
along with additional
entity $z_2$
connected to the chain.
$I_1(\mathsf{h}, x)$ are formulas in $\text{CML}[G,\mathsf{h}]$ and can be defined with recursive rules in Corollary~\ref{coro:QL} (proven in Corollary~\ref{app:corollary:inductive_1}), which indicates that $I_1(\mathsf{h}, x)$ can be learned by QL-GNN.
These structures are important in KG reasoning because the entity connected to the chain can bring extra information about property of the entity it connected to (see examples of rule in Appendix~\ref{app:sec:rule-analysis}).

\subsection{Comparison with classical methods}

Classical methods such as R-GCN and CompGCN perform KG reasoning by first applying MPNN (\ref{eq:GNN}) to compute the entity representations $\mathbf{e}_v^{(L)}, v\in\mathcal{V}$ and then scoring the triplet $(h, R, t)$ by $s(h,R,t)=\text{Agg}(\mathbf{e}_h^{(L)}, \mathbf{e}_t^{(L)})$ with aggregation function $\text{Agg}(\cdot,\cdot)$. For simplicity, we take CompGCN as an example to analyze the expressivity of the classical methods on learning rule structures.

Since CompGCN scores a triplet using its query and tail entity representations without applying labeling trick, 
the rule structures learned by CompGCN should be in the form of $R(x,y)$.
In CompGCN, the query and tail entities' representations encode different subgraphs. 
However, the joint subgraph they represent may not necessarily be connected. 
This suggests that the rule structures learned by CompGCN are non-structural, indicating there is no path between its query and tail entities except for relation $R$. This observation is proven with the following theorem.
\begin{theorem}[{Logical expressivity of CompGCN}]
   \label{theorem:EA}
   For KG reasoning, 
   CompGCN can learn the rule formula
   $R(x, y)=f_R\left(\{\varphi(x)\}, \{\varphi^\prime(y)\}\right)$
   where $f_R$ is a formula involving sub-formulas from $\{\varphi(x)\}$ and $\{\varphi^\prime(y)\}$ which are the sets of formulas in $\text{CML}[G]$.
\end{theorem}

{
\begin{remark*}
	Theorem~\ref{theorem:EA} indicates that representations of two end entities encoding two formulas respectively, and these two formulas are independent. Thus, the rule structures learned by CompGCN should be two disconnected subgraphs surrounding the query and tail entities respectively.
\end{remark*}
}
Similar to Theorem~\ref{theorem:QL}, CompGCN learns rule formula $R(x, y)$ by treating it as a binary classifier.
In a KG, the binary classifier $R(x,y)$ should output true if the triplet $(h,R,t)$ exists; otherwise, it should output false.
If CompGCN can learn the rule formula $R(x,y)$, it implies that it can estimate the binary classifier $R(x, y)$.
Consequently, if the rule formula $R(x, y)$ is (not) satisfied at entity pair $(h,t)$, the score $s(h,R,t)$ is a high (low) value, indicating the existence (absence) of triplet $(h,R,t)$.

Theorem~\ref{theorem:EA} (proven in Appendix~\ref{app:proof}) shows that CompGCN can only learn rule formula $R(x,y)$ for non-structural rules. 
One important type of relation in this category is the similarity between two entities (experiments in Appendix~\ref{app:exp:EA-GNN}), like $\texttt{same\_color}(x,y)$ indicating entities with the same color.
However, structural rules are more commonly observed in KG reasoning~\citep{lavrac1994inductive,sadeghian2019drum,srinivasan2019equivalence}.
{
Since Theorem~\ref{theorem:EA} indicates CompGCN fails to learn connected rule structures that are not independent, the structural rules in Figure~\ref{fig:example} cannot be learned by CompGCN.
}
Such a comparison shows why QL-GNN is more efficient than classical methods, e.g., R-GCN and CompGCN, in real applications.
Compared with previous work on single-relational graphs, \citet{zhang2021labeling} shows CompGCN cannot distinguish many non-isomorphic links, while our paper derives expressivity of CompGCN for learning rule structures.

\section{Entity Labeling GNN based on rule formula transformation}\label{sec:complex-rule}

QL-GNN is proven to be able to learn the class of rule structures defined in Corollary~\ref{coro:QL}.
For rule structures outside this class, we try to learn them with a novel labeling trick based on QL-GNN.
Our general idea is to transform the rule structures outside this class into the rule structures in this class by adding constants to the graph.
The following proposition and corollary show how to add constants to a rule structure so that it can be described by formulas in CML and how to apply labeling trick to make it learnable for QL-GNN.

\begin{proposition}\label{proposition:label}
Let $R(\mathsf{h}, x)$ describe a single-connected rule structure $\mathsf{G}$ in $G$. 
If we assign constants $\mathsf{c}_1, \mathsf{c}_2, \cdots, \mathsf{c}_k$ to all $k$ entities with out-degree larger than one in $\mathsf{G}$, the rule structure $\mathsf{G}$ can be described with a new rule formula $R'(\mathsf{h}, x)$ in $\text{CML}[G,\mathsf{h},\mathsf{c}_1, \mathsf{c}_2, \cdots, \mathsf{c}_k]$.
\end{proposition}

\begin{corollary}\label{coro:label}
Applying labeling trick with unique initial representations to entities assigned with constants $\mathsf{c}_1, \mathsf{c}_2, \cdots, \mathsf{c}_k$ in Proposition~\ref{proposition:label}, the rule structure $\mathsf{G}$ can be learned by QL-GNN.
\end{corollary}

For instance, in Figure~\ref{fig:sec5}, the rule structure $U$ cannot be distinguished from the rule structure $T$ by recursive definition in Corollary~\ref{coro:QL}, thus cannot be learned by QL-GNN.
In this example, 
Proposition~\ref{proposition:label} suggests assigning constant $\mathsf{c}$ to the entity colored with gray in Figure~\ref{fig:sec5}, then a new rule formula
\begin{align*}
U'(\mathsf{h}, x) := R_1(\mathsf{h}, \mathsf{c}) \wedge        \big( \exists z_2, z_3, R_2(\mathsf{c}, z_2) \wedge R_4(z_2, x) \wedge R_3(\mathsf{c}, z_3) \wedge R_5(z_3, x) \big)  
\end{align*}
in $\text{CML}[G, \mathsf{h}, \mathsf{c}]$ (Corollary~\ref{app:corollary:labeling}) can describe the rule structure of $U$. 
Therefore, the rule structure of $U$ can be learned with $U'(\mathsf{h}, x)$ by QL-GNN with constant $\mathsf{c}$ and cannot be learned by classical methods and vanilla QL-GNN.

\begin{algorithm}[h]
	\caption{Entity Labeling}
	\begin{algorithmic}[1]
		\REQUIRE query $(h, R, ?)$, knowledge graph $G$, degree threshold $d$.
		\STATE compute the out-degree $d_v$ of each entity $v$ in $G$;
		\FOR{entity $v$ in $G$}\label{step:labeling_start}
			\IF{$d_v > d$}
				\STATE assign a unique representation $\mathbf{e}_v^{(0)}$ to entity $v$;
			\ENDIF
		\ENDFOR\label{step:labeling_end}
		\STATE assign initial representation $\mathbf{e}_h^{(0)}$ to the query entity $h$;\label{step:query_labeling}
	\STATE \textbf{Return:} initial representation of all entities.
	\end{algorithmic}
	\label{alg:labeling}
\end{algorithm}

\begin{wrapfigure}{f}{0.65\columnwidth}
   \centering
   \vspace{-15px}
   \includegraphics[width=.65\columnwidth]{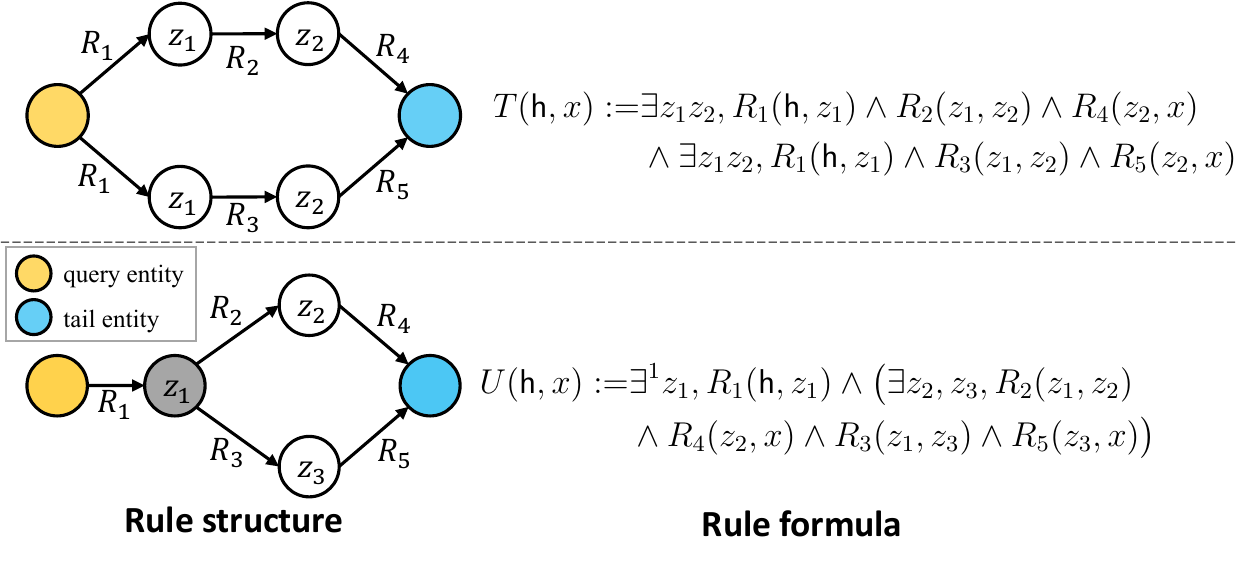}
   \vspace{-20px}
   \caption{Two rule structures cannot be distinguished by QL-GNN.}
   \label{fig:sec5}
   \vspace{-10px}
\end{wrapfigure}

Based on Corollary~\ref{coro:label},
we need apply labeling trick to entities other than the query entities in QL-GNN to learn the rule structures outside the scope of Corollary~\ref{coro:QL}.
The new method is called Entity-Labeling (EL) GNN shown in Algorithm~\ref{alg:labeling} and is different from QL-GNN in assigning constants to all the entities with out-degree larger than $d$.
We choose the degree threshold $d$ as a hyperparameter because a small $d$ (such as $1$) will introduce too many constants to KG, which impedes the generalization of GNN~\citep{abboud2020surprising} (see an explanation from logical perspective in Appendix~\ref{app:sec:many-features}).
In fact, a smaller $d$ makes GNN learn the rule formulas with many constants and results bad generalization,
while a larger $d$ may not be able to transform indistinguishable rules into formulas in CML.
As a result, the degree threshold $d$ should be tuned to balance the expressivity and generalization of GNN.
Same as the constant $\mathsf{h}$
in QL-GNN,
we add a unique initial representation $\mathbf{e}_v^{(0)}$ for entities $v$ whose out-degree $d_v>d$ in steps~3-5.
For the query entity $h$, 
we assign it 
with a unique initial representation $\mathbf{e}_h^{(0)}$ in step~7.
{
In Algorithm~\ref{alg:labeling}, it can be seen that the additional time of EL-GNN comes from traversing all entities in the graph. The additional time complexity is linear with respect to the number of entities, which is negligible compared to QL-GNN.
}
For convenience, GNN initialized with EL algorithm is denoted as EL-GNN (e.g., EL-NBFNet) in our paper.

\paragraph{Discussion}

In Figure~\ref{fig:overview}, 
we visually compare the expressivity of QL-GNN and EL-GNN.
Classical methods, e.g., R-GCN and CompGCN, are not compared here because they can solely learn non-structural rules which are not commonly-seen in real applications.
QL-GNN, e.g., NBFNet and RED-GNN, excels at learning rule structures described by formula $R(\mathsf{h}, x)$ in $\text{CML}[G, \mathsf{h}]$. 
The proposed EL-GNN, encompassing QL-GNN as a special case, can learn rule structures described by formula $R(\mathsf{h},x)$ in $\text{CML}[G, \mathsf{h}, \mathsf{c}_1, \cdots, \mathsf{c}_k]$ which has a larger description scope than $\text{CML}[G, \mathsf{h}]$.

\section{Related Works}

\subsection{Expressivity of Graph Neural Network~(GNN)}
\label{sec:rel:gnn}

GNN~\citep{kipf2016semi,gilmer2017neural} has shown good performance on a wide range of tasks involving graph-structured data, thus many existing works try to analyze the expressivity of GNNs.
Most of these works analyze the expressivity of GNNs
from the perspective of graph isomorphism testing.
A well-known result~\citep{xu2018powerful} shows that the expressivity of vanilla GNN
is limited to 
WL test and the result is extended to KG by \citet{barcelo2022weisfeiler}.
To improve the expressivity of GNNs, 
most of the existing works either design GNNs motivated by high-order WL test~\citep{morris2019weisfeiler,morris2020weisfeiler,barcelo2022weisfeiler} 
or apply special initial representations~\citep{abboud2020surprising,you2021identity,sato2021random,zhang2021labeling}.
Except for using graph isomorphism testing, 
\citet{barcelo2020logical} analyze the {logical expressivity of GNNs} and identify that the logical rules from graded modal logic can be learned by vanilla GNN.
However, their analysis is limited to node classification on the single-relational graph.
Except from the expressivity of vanilla GNN, \citet{tena2022explainable} propose monotonic GNN whose prediction can be explained by symbolical rules in Datalog and the expressivity of monotonic GNN is further analyzed in \citet{cucala2023correspondence}.

Regarding the expressivity of GNNs for link prediction, \citet{srinivasan2019equivalence} demonstrate that GNNs' structural node representations alone are insufficient for accurate link prediction. To overcome this limitation, they introduce a method that incorporates Monte Carlo samples of node embeddings obtained from network embedding techniques instead of relying solely on GNNs.
However, \citet{zhang2021labeling} discovered that by leveraging the labeling trick in GNNs, 
it is indeed possible to learn structural link representations for effective link prediction. 
This finding provides reassurance regarding the viability of GNNs for this task.
Nonetheless, their analysis is confined to single-relational graphs, and their conclusions are limited to the fact that the labeling trick enables distinct representations for some non-isomorphic links, which other approaches cannot achieve.
In this paper, we delve into the analysis of GNNs' {logical expressivity to study their ability of learning rule structures}. By doing so, we aim to gain a comprehensive understanding of the rule structures that SOTA GNNs can learn in graphs. 
Our analysis encompasses both single-relational graph and KGs, thus broadening the applicability of our findings.

A concurrent work by \citet{huang2023theory} analyzes the expressivity of GNNs for NBFNet (a kind of QL-GNN in our paper) with conditional MPNN while our work unifies state-ot-the-art GNNs into QL-GNN and analyzes the expressivity from a different perspective focusing on the understanding of relationship between labeling trick and constants in logic.

\subsection{Knowledge graph reasoning}
\label{sec:rel:kgr}

KG reasoning is the task to predict new facts based on the known facts in a KG $G=(\mathcal{V}, \mathcal{E}, \mathcal{R})$ where $\mathcal{V},\mathcal{E},\mathcal{R}$ are sets of entities, edges and relation types in the graph respectively.
The facts (or edges, links) are typically expressed as triplets in the form of $(h,R,t)$,
where the head entity $h$ and tail entity $t$ are related with the relation type $R$.
KG reasoning can be modeled as the process of predicting the tail entity $t$ of a query in the form $(h,R,?)$ where $h$ is called the query entity in our paper.
The head prediction $(?,R,t)$ can be transformed into tail prediction $(t,R^{-1},?)$ with inverse relation $R^{-1}$. 
Thus, we focus on tail prediction in this paper.

Embedding-based methods like TransE~\citep{bordes2013translating}, ComplEx~\citep{trouillon2016complex}, RotatE~\citep{sun2019rotate}, 
and QuatE~\citep{zhang2019quaternion} have been developed for KG reasoning. 
They learn embeddings for entities and relations,
and predict facts by aggregating their representations.
To capture local evidence within graphs, Neural LP~\citep{yang2017differentiable} and DRUM~\citep{sadeghian2019drum} 
learn logical rules based on predefined chain-like structures.
However, apart from chain-like rules, these methods failed to learn more complex structures in KG~\citep{hamilton2018embedding,ren2019query2box}. GNNs have also been used for KG reasoning, 
such as R-GCN~\citep{schlichtkrull2018modeling} and CompGCN~\citep{vashishth2019composition}, 
which aggregate entity and relation representations to calculate scores for new facts.
However, these methods struggle to differentiate between the structural roles of different neighbors~\citep{srinivasan2019equivalence,zhang2021labeling}. 
GraIL~\citep{teru2020inductive} addresses this by extracting enclosing subgraphs to predict new facts, 
while RED-GNN~\citep{zhang2022knowledge} employs dynamic programming for efficient subgraph extraction and predicts new facts based on the tail entity representation.
To extract relevant structures from graph, AdaProp~\citep{zhang2023adaprop} improves RED-GNN by employing adaptive propagation to filter out irrelevant entities and retain promising targets.
Motivated by the effectiveness of heuristic path-based metrics for link prediction, 
NBFNet~\citep{zhu2021neural} proposes a neural network aligned with Bellman-Ford algorithm for KG reasoning.
\citet{zhu2022scalable} propose A$^\star$Net to learn a priority function to select important nodes and edges at each iteration.
Specifically, AdaProp and A$^\star$Net are variants of RED-GNN and NBFNet, respectively, designed to enhance their scalability.
Among these methods, RED-GNN, NBFNet, AdaProp, and A$^\star$Net achieve state-of-the-art performance on KG reasoning.

\section{Experiment}

In this section, we validate our theoretical findings from Section~\ref{sec:gnn-exp} and showcase the efficacy of our proposed EL-GNN (Section~\ref{sec:complex-rule}) on synthetic and real datasets through experiments. All experiments were implemented in Python using PyTorch and executed on A100 GPUs with 80GB memory.

\subsection{Experiments on synthetic datasets}\label{ssec:synthetic-exp}

We generate six KGs based on rule structures in Figure~\ref{fig:example}, \ref{fig:sec5}, \ref{app:fig:more-rules} to validate our theory on expressivity and verify the improved performance of EL-GNN.
These rule structures are either analyzed in the previous sections, or representative for evaluating GNN's ability for learning rule structures. 
We evaluate R-GCN, CompGCN, RED-GNN, NBFNet, EL-RED-GNN, and EL-NBFNet (using RED-GNN/NBFNet as backbone with Algorithm~\ref{alg:labeling}).
Our evaluation metric is prediction \textit{Accuracy} which measures how well a rule structure is learned. We report testing accuracy of classical methods, QL-GNN, and EL-GNN on six synthetic graphs. Hyperparameters for all methods are automatically tuned with Ray~\citep{liaw2018tune} based on the validation accuracy.

\begin{table}[h]
	\vspace{-15px}
	\centering
	\caption{Accuracy on synthetic data.}
	\small
	\begin{tabular}{c|c|c|c|c|c|c|c}
		\toprule
		Method & Method & $C_3$ & $C_4$ & $I_1$ & $I_2$ & $T$ & $U$ \\ \midrule
		\multirow{2}{*}{Classical}&R-GCN         &      0.016      & 0.031  & 0.044 &    0.024   &          0.067            &        0.014        \\ 
		&CompGCN         &    0.016        &  0.021 & 0.053 &    0.039   &             0.067         &      0.027           \\ \midrule
		\multirow{2}{*}{QL-GNN} & RED-GNN          &  1.0       & 1.0  & 1.0 &   1.0   &   1.0      &     0.405         \\  
		& NBFNet          & 1.0        &  1.0 & 1.0 &  1.0  & 1.0                  &   0.541                   \\   \midrule
		\multirow{2}{*}{EL-GNN} & EL-RED-GNN &  1.0         &  1.0  & 1.0 &  1.0    &         1.0             &          0.797            \\ 
		& EL-NBFNet          & 1.0        & 1.0  &  1.0 &   1.0     & 1.0       &       0.838           \\ \bottomrule
	\end{tabular}\label{tab:syn-data}
	\vspace{-15px}
\end{table}

\paragraph{Dataset generation}
Given a target relation, there are three steps to generate a dataset:
(1) rule structure generation: generate specific rule structures according to their definition; (2) noisy triplets generation: generate noisy triplets to avoid GNN from learning naive rule structures; (3) missing triplets completion: generate missing triplets based on the target rule structure because the noisy triplets generation step could add triplets satisfying the target rule structure.
We use triplets generated from rule structure and noisy triplets generation steps as known triplets in graph. Triplets with the target relation are separated into training, validation, and testing sets. 
Our experimental setting differs slightly from previous works as all GNNs in the experiments only perform message passing on the known triplets in the graph.
This setup is reasonable and allows for evaluating the performance of GNNs in learning rule structures because the presence of a triplet can be determined based on the known triplets in the graph, following the rule structure generation process.

\paragraph{Results}

Table~\ref{tab:syn-data} presents the testing accuracy of classical GNN methods, QL-GNN, and EL-GNN on six synthetic datasets (denoted as $C_3, C_4, I_1, I_2, T,$ and $U$) generated from their corresponding rule structures. The experimental results support our theory. CompGCN performs poorly on all six datasets, as it fails to learn the underlying rule structures discussed in examples of Section~\ref{sec:gnn-exp} (refer to Section~\ref{app:exp:EA-GNN} for experiments of CompGCN). QL-GNN achieves perfect predictions (100\% accuracy) for triplets with relations $C_l, I_i,$ and $T$, successfully learning the corresponding rule formulas from $\text{CML}[G, \mathsf{h}]$. EL-GNN demonstrates improved expressivity, as evidenced by its performance on dataset $U$, aligning with the analysis in Section~\ref{sec:complex-rule}. Furthermore, EL-GNN effectively learns rule formulas $C(\mathsf{h}, x)$ and $I(\mathsf{h}, x)$, validating its expressivity.

\begin{wrapfigure}{f}{0.42\columnwidth}
	\centering
	\vspace{-30px}
	\includegraphics[width=0.39\textwidth]{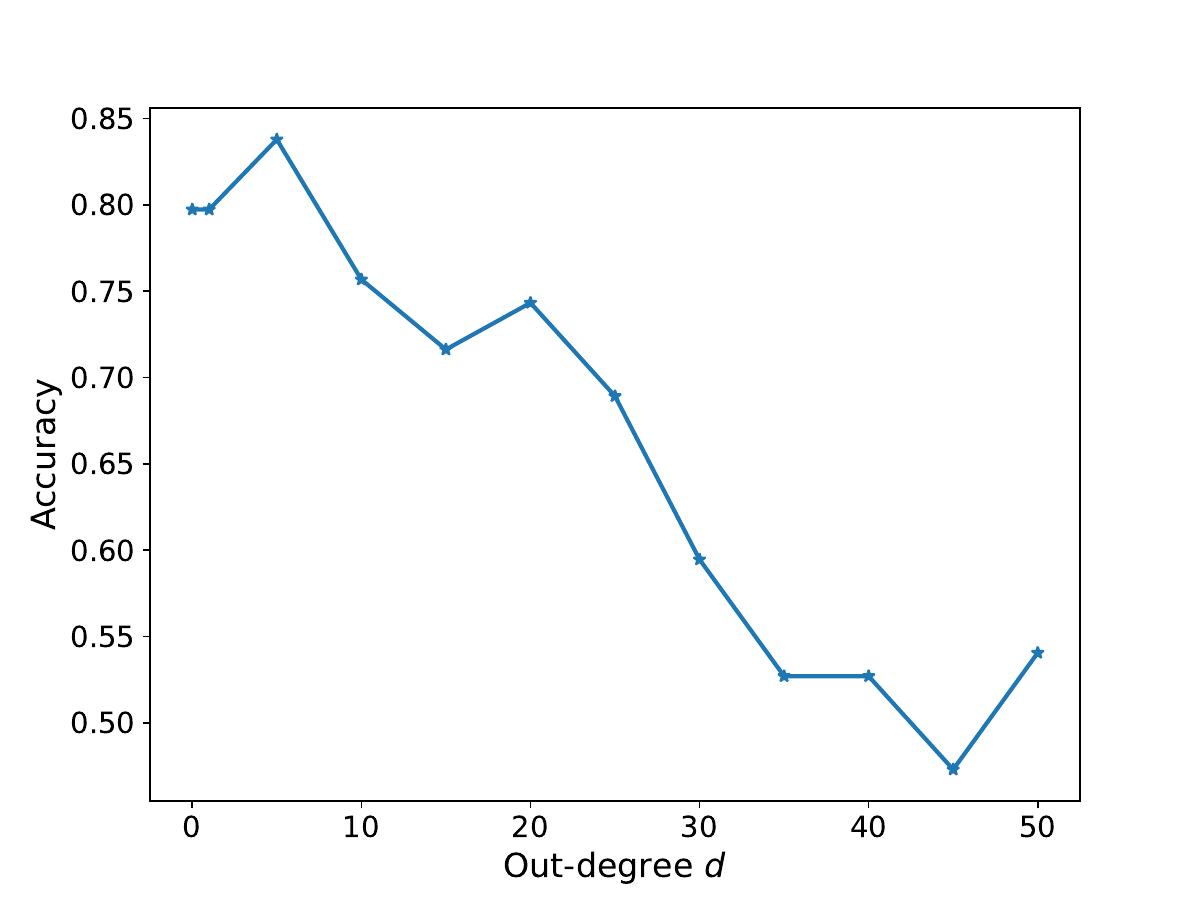}
	\vspace{-10px}
	\caption{Accuracy versus out-degree $d$ of EL-GNN on the dataset with relation $U$.}
	\label{fig:out-degree}
	\vspace{-30px}
\end{wrapfigure}

Furthermore, we demonstrate the impact of the degree threshold $d$ on EL-GNN with dataset $U$. 
The testing accuracy in Figure~\ref{fig:out-degree} reveals that an excessively small or large out-degree $d$ hinders the performance of EL-GNN. 
Therefore, it is important to empirically fine-tune the hyperparameter $d$.
{To test the robustness of QL-GNN and EL-GNN in learning rules with incomplete structures, we randomly remove triplets in the training set to evaluate the accuracy of learning rule structures. The results can be found in Appendix~\ref{app:exp:missing}.}

\subsection{Experiments on real datasets} \label{ssec:real-exp}

In this section, we follow the standard setup as \citet{zhu2021neural} to test EL-GNN's effectiveness on four real datasets: Family~\citep{kok2007statistical}, Kinship~\citep{hinton1986learning}, UMLS~\citep{kok2007statistical}, WN18RR~\citep{dettmers2017convolutional}, {and FB15k-237~\citep{toutanova-chen-2015-observed}}. 
For a fair comparison, we evaluate EL-NBFNet and EL-RED-GNN (applying EL to NBFNet and RED-GNN) using the same hyperparameters as NBFNet and RED-GNN and handcrafted $d$. 
We compare it with embedding-based methods (RotatE, QuatE), rule-based methods (Neural LP, DRUM), and GNN-based methods (CompGCN, NBFNet, RED-GNN). 
To evaluate performance, 
we provide testing accuracy 
and standard deviation obtained from three repetitions for thorough evaluation.

In Table~\ref{tab:real_result}, we present our experimental findings. 
The results first show that NBFNet and RED-GNN (QL-GNN) outperform CompGCN. 
Furthermore, the proposed EL algorithm improves the accuracy of RED-GNN and NBFNet on real datasets. 
However, the degree of improvement varies across datasets due to the number and variations of rule types, and the quality of missing triplets in training sets.
{More experimental results, e.g., time cost and more performance metrics, are in Appendix~\ref{app:exp:real}.
}

\begin{table*}[ht]
	\centering
	\vspace{-15px}
	\caption{Accuracy and standard deviation on real datasets. The best (and comparable best) results are in ``\textbf{bold}'',
			the second (and comparable second) best are \underline{underlined}.}
	\label{tab:real_result}
	\setlength\tabcolsep{4pt}
	\footnotesize
	\begin{tabular}{c|c|c|c|c|c|c}
		\toprule
		        \multirow{1}{*}{Method Class}         & \multirow{1}{*}{Methods} & \multicolumn{1}{c|}{Family} & \multicolumn{1}{c|}{Kinship} & \multicolumn{1}{c|}{UMLS} & \multicolumn{1}{c|}{WN18RR} & \multicolumn{1}{c}{FB15k-237} \\ \cmidrule{1-7}
		\multirow{2}{*}{\makecell{Embedding-\\based}} &          RotatE          &         0.865±0.004         &         0.704±0.002          &        0.860±0.003        &         0.427±0.003         &       {0.240±0.001}       \\
		                                              &          QuatE           &         0.897±0.001         &         0.311±0.003          &        0.907±0.002        &         0.441±0.002         &       {0.255±0.004}       \\ \midrule
		         \multirow{2}{*}{Rule-based}          &        Neural LP         &         0.872±0.002         &         0.481±0.006          &        0.630±0.001        &         0.369±0.003         &       {0.190±0.002}       \\
		                                              &           DRUM           &         0.880±0.003         &         0.459±0.005          &        0.676±0.004        &         0.424±0.002         &       {0.252±0.003}       \\ \midrule
		     \multirow{5}{*}[-0.8ex]{GNN-based}       &         CompGCN          &         0.883±0.001         &         0.751±0.003          &        0.867±0.002        &         0.443±0.001         &       {0.265±0.001}       \\
		                                              &         RED-GNN          &    \textbf{0.988±0.002}     &   \underline{0.820±0.003}    &        \underline{0.946±0.001}        &   \underline{0.502±0.001}   &       {0.284±0.002}       \\
		                                              &          NBFNet          &   \underline{0.977±0.001}   &   \underline{0.819±0.002}    &        \underline{0.946±0.002}        &         0.496±0.002         & \underline{{0.320±0.001}} \\ \cmidrule{2-7}
		                                              &        EL-RED-GNN        &    \textbf{0.990±0.002}     &     \textbf{0.839±0.001}     &  \textbf{0.952±0.003}  &    \textbf{0.504±0.001}     & \underline{{0.322±0.002}} \\
		                                              &        EL-NBFNet         &    \textbf{0.985±0.001}     &     \textbf{0.842±0.003}     &   \textbf{0.953±0.002}    &   \underline{0.501±0.003}   &  \textbf{0.332±0.001}   \\ \midrule
	\end{tabular}
	 \vspace{-15px}
\end{table*}

\section{Conclusion}

In this paper, we analyze the expressivity of the state-of-the-art GNNs for learning rules in KG reasoning, 
explaining their superior performance over classical methods. 
Our analysis sheds light on the rule structures that GNNs can learn. Additionally, our theory motivates an effective labeling method to improve GNN's expressivity. Moving forward, we will extend our analysis to GNNs with general labeling trick and try to extract explainable rule structures from trained GNN.
Limitations and impacts are discussed in Appendix~\ref{app:sec:limitation}.

\section*{Acknowledgments}
Q. Yao was in part supported by
National Key Research and Development Program of China under Grant 2023YFB2903904
and
NSFC (No. 92270106).

\bibliography{iclr2024_conference}

\begin{thebibliography}{43}
\providecommand{\natexlab}[1]{#1}
\providecommand{\url}[1]{\texttt{#1}}
\expandafter\ifx\csname urlstyle\endcsname\relax
  \providecommand{\doi}[1]{doi: #1}\else
  \providecommand{\doi}{doi: \begingroup \urlstyle{rm}\Url}\fi

\bibitem[Abboud et~al.(2021)Abboud, Ceylan, Grohe, and
  Lukasiewicz]{abboud2020surprising}
Ralph Abboud, Ismail~Ilkan Ceylan, Martin Grohe, and Thomas Lukasiewicz.
\newblock The surprising power of graph neural networks with random node
  initialization.
\newblock In \emph{International Joint Conference on Artificial Intelligence},
  2021.

\bibitem[Arakelyan et~al.(2021)Arakelyan, Daza, Minervini, and
  Cochez]{arakelyan2020complex}
Erik Arakelyan, Daniel Daza, Pasquale Minervini, and Michael Cochez.
\newblock Complex query answering with neural link predictors.
\newblock In \emph{International Conference on Learning Representations}, 2021.

\bibitem[Barcel{\'o} et~al.(2020)Barcel{\'o}, Kostylev, Monet, P{\'e}rez,
  Reutter, and Silva]{barcelo2020logical}
Pablo Barcel{\'o}, Egor~V Kostylev, Mikael Monet, Jorge P{\'e}rez, Juan
  Reutter, and Juan-Pablo Silva.
\newblock The logical expressiveness of graph neural networks.
\newblock In \emph{International Conference on Learning Representations}, 2020.

\bibitem[Barcelo et~al.(2022)Barcelo, Galkin, Morris, and
  Orth]{barcelo2022weisfeiler}
Pablo Barcelo, Mikhail Galkin, Christopher Morris, and Miguel~Romero Orth.
\newblock Weisfeiler and leman go relational.
\newblock In \emph{The First Learning on Graphs Conference}, 2022.
\newblock URL \url{https://openreview.net/forum?id=wY_IYhh6pqj}.

\bibitem[Battaglia et~al.(2018)Battaglia, Hamrick, Bapst, Sanchez-Gonzalez,
  Zambaldi, Malinowski, Tacchetti, Raposo, Santoro, Faulkner,
  et~al.]{battaglia2018relational}
Peter~W Battaglia, Jessica~B Hamrick, Victor Bapst, Alvaro Sanchez-Gonzalez,
  Vinicius Zambaldi, Mateusz Malinowski, Andrea Tacchetti, David Raposo, Adam
  Santoro, Ryan Faulkner, et~al.
\newblock Relational inductive biases, deep learning, and graph networks.
\newblock \emph{arXiv preprint arXiv:1806.01261}, 2018.

\bibitem[Bordes et~al.(2013)Bordes, Usunier, Garcia-Duran, Weston, and
  Yakhnenko]{bordes2013translating}
Antoine Bordes, Nicolas Usunier, Alberto Garcia-Duran, Jason Weston, and Oksana
  Yakhnenko.
\newblock Translating embeddings for modeling multi-relational data.
\newblock \emph{Advances in Neural Information Processing Systems}, 2013.

\bibitem[Cao et~al.(2019)Cao, Wang, He, Hu, and Chua]{cao2019unifying}
Yixin Cao, Xiang Wang, Xiangnan He, Zikun Hu, and Tat-Seng Chua.
\newblock Unifying knowledge graph learning and recommendation: Towards a
  better understanding of user preferences.
\newblock In \emph{International World Wide Web Conference}, 2019.

\bibitem[Cucala et~al.(2023)Cucala, Grau, Motik, and
  Kostylev]{cucala2023correspondence}
David~Tena Cucala, Bernardo~Cuenca Grau, Boris Motik, and Egor~V Kostylev.
\newblock On the correspondence between monotonic max-sum gnns and datalog.
\newblock \emph{arXiv preprint arXiv:2305.18015}, 2023.

\bibitem[De~Rijke(2000)]{de2000note}
Maarten De~Rijke.
\newblock A note on graded modal logic.
\newblock \emph{Studia Logica}, 64\penalty0 (2):\penalty0 271--283, 2000.

\bibitem[Dettmers et~al.(2017)Dettmers, Minervini, Stenetorp, and
  Riedel]{dettmers2017convolutional}
Tim Dettmers, Pasquale Minervini, Pontus Stenetorp, and Sebastian Riedel.
\newblock Convolutional {2D} knowledge graph embeddings.
\newblock In \emph{AAAI conference on Artificial Intelligence}, 2017.

\bibitem[Gilmer et~al.(2017)Gilmer, Schoenholz, Riley, Vinyals, and
  Dahl]{gilmer2017neural}
Justin Gilmer, Samuel~S Schoenholz, Patrick~F Riley, Oriol Vinyals, and
  George~E Dahl.
\newblock Neural message passing for quantum chemistry.
\newblock In \emph{International Conference on Machine Learning}, 2017.

\bibitem[Hamilton et~al.(2018)Hamilton, Bajaj, Zitnik, Jurafsky, and
  Leskovec]{hamilton2018embedding}
Will Hamilton, Payal Bajaj, Marinka Zitnik, Dan Jurafsky, and Jure Leskovec.
\newblock Embedding logical queries on knowledge graphs.
\newblock \emph{Advances in neural information processing systems}, 31, 2018.

\bibitem[Hinton et~al.(1986)]{hinton1986learning}
Geoffrey~E Hinton et~al.
\newblock Learning distributed representations of concepts.
\newblock In \emph{Annual Conference of the Cognitive Science Society}, 1986.

\bibitem[Huang et~al.(2023)Huang, Orth, Ceylan, and
  Barcel{\'o}]{huang2023theory}
Xingyue Huang, Miguel~Romero Orth, {\.I}smail~{\.I}lkan Ceylan, and Pablo
  Barcel{\'o}.
\newblock A theory of link prediction via relational weisfeiler-leman.
\newblock \emph{arXiv preprint arXiv:2302.02209}, 2023.

\bibitem[Ji et~al.(2021)Ji, Pan, Cambria, Marttinen, and Philip]{ji2021survey}
Shaoxiong Ji, Shirui Pan, Erik Cambria, Pekka Marttinen, and S~Yu Philip.
\newblock A survey on knowledge graphs: Representation, acquisition, and
  applications.
\newblock \emph{IEEE transactions on neural networks and learning systems},
  33\penalty0 (2):\penalty0 494--514, 2021.

\bibitem[Kipf \& Welling(2016)Kipf and Welling]{kipf2016semi}
Thomas~N Kipf and Max Welling.
\newblock Semi-supervised classification with graph convolutional networks.
\newblock In \emph{International Conference on Learning Representations}, 2016.

\bibitem[Kok \& Domingos(2007)Kok and Domingos]{kok2007statistical}
Stanley Kok and Pedro Domingos.
\newblock Statistical predicate invention.
\newblock In \emph{International Conference on Machine Learning}, 2007.

\bibitem[Lavrac \& Dzeroski(1994)Lavrac and Dzeroski]{lavrac1994inductive}
Nada Lavrac and Saso Dzeroski.
\newblock Inductive logic programming.
\newblock In \emph{WLP}, pp.\  146--160. Springer, 1994.

\bibitem[Liaw et~al.(2018)Liaw, Liang, Nishihara, Moritz, Gonzalez, and
  Stoica]{liaw2018tune}
Richard Liaw, Eric Liang, Robert Nishihara, Philipp Moritz, Joseph~E Gonzalez,
  and Ion Stoica.
\newblock Tune: A research platform for distributed model selection and
  training.
\newblock \emph{arXiv preprint arXiv:1807.05118}, 2018.

\bibitem[Mohamed et~al.(2019)Mohamed, Nov{\'a}cek, and
  Nounu]{Mohamed2019DiscoveringPD}
Sameh~K. Mohamed, V{\'i}t Nov{\'a}cek, and Aayah Nounu.
\newblock Discovering protein drug targets using knowledge graph embeddings.
\newblock \emph{Bioinformatics}, 2019.

\bibitem[Morris et~al.(2019)Morris, Ritzert, Fey, Hamilton, Lenssen, Rattan,
  and Grohe]{morris2019weisfeiler}
Christopher Morris, Martin Ritzert, Matthias Fey, William~L Hamilton, Jan~Eric
  Lenssen, Gaurav Rattan, and Martin Grohe.
\newblock Weisfeiler and leman go neural: Higher-order graph neural networks.
\newblock In \emph{AAAI conference on Artificial Intelligence}, 2019.

\bibitem[Morris et~al.(2020)Morris, Rattan, and Mutzel]{morris2020weisfeiler}
Christopher Morris, Gaurav Rattan, and Petra Mutzel.
\newblock Weisfeiler and leman go sparse: Towards scalable higher-order graph
  embeddings.
\newblock \emph{Advances in Neural Information Processing Systems}, 2020.

\bibitem[Otto(2019)]{otto2019graded}
Martin Otto.
\newblock Graded modal logic and counting bisimulation.
\newblock \emph{arXiv preprint arXiv:1910.00039}, 2019.

\bibitem[Ren et~al.(2019)Ren, Hu, and Leskovec]{ren2019query2box}
Hongyu Ren, Weihua Hu, and Jure Leskovec.
\newblock Query2box: Reasoning over knowledge graphs in vector space using box
  embeddings.
\newblock In \emph{International Conference on Learning Representations}, 2019.

\bibitem[Sadeghian et~al.(2019)Sadeghian, Armandpour, Ding, and
  Wang]{sadeghian2019drum}
Ali Sadeghian, Mohammadreza Armandpour, Patrick Ding, and Daisy~Zhe Wang.
\newblock Drum: End-to-end differentiable rule mining on knowledge graphs.
\newblock \emph{Advances in Neural Information Processing Systems}, 2019.

\bibitem[Sato et~al.(2021)Sato, Yamada, and Kashima]{sato2021random}
Ryoma Sato, Makoto Yamada, and Hisashi Kashima.
\newblock Random features strengthen graph neural networks.
\newblock In \emph{SIAM International Conference on Data Mining}, 2021.

\bibitem[Schlichtkrull et~al.(2018)Schlichtkrull, Kipf, Bloem, Berg, Titov, and
  Welling]{schlichtkrull2018modeling}
Michael Schlichtkrull, Thomas~N Kipf, Peter Bloem, Rianne van~den Berg, Ivan
  Titov, and Max Welling.
\newblock Modeling relational data with graph convolutional networks.
\newblock In \emph{European Semantic Web Conference}, 2018.

\bibitem[Srinivasan \& Ribeiro(2020)Srinivasan and
  Ribeiro]{srinivasan2019equivalence}
Balasubramaniam Srinivasan and Bruno Ribeiro.
\newblock On the equivalence between positional node embeddings and structural
  graph representations.
\newblock \emph{ICLR}, 2020.

\bibitem[Sun et~al.(2019)Sun, Deng, Nie, and Tang]{sun2019rotate}
Zhiqing Sun, Zhi-Hong Deng, Jian-Yun Nie, and Jian Tang.
\newblock Rotate: Knowledge graph embedding by relational rotation in complex
  space.
\newblock In \emph{International Conference on Learning Representations}, 2019.

\bibitem[Tena~Cucala et~al.(2022)Tena~Cucala, Cuenca~Grau, Kostylev, and
  Motik]{tena2022explainable}
DJ~Tena~Cucala, B~Cuenca~Grau, Egor~V Kostylev, and Boris Motik.
\newblock Explainable gnn-based models over knowledge graphs.
\newblock 2022.

\bibitem[Teru et~al.(2020)Teru, Denis, and Hamilton]{teru2020inductive}
Komal Teru, Etienne Denis, and Will Hamilton.
\newblock Inductive relation prediction by subgraph reasoning.
\newblock In \emph{International Conference on Machine Learning}, 2020.

\bibitem[Toutanova \& Chen(2015)Toutanova and
  Chen]{toutanova-chen-2015-observed}
Kristina Toutanova and Danqi Chen.
\newblock Observed versus latent features for knowledge base and text
  inference.
\newblock In Alexandre Allauzen, Edward Grefenstette, Karl~Moritz Hermann, Hugo
  Larochelle, and Scott Wen-tau Yih (eds.), \emph{Proceedings of the 3rd
  Workshop on Continuous Vector Space Models and their Compositionality}, pp.\
  57--66, Beijing, China, July 2015. Association for Computational Linguistics.
\newblock \doi{10.18653/v1/W15-4007}.
\newblock URL \url{https://aclanthology.org/W15-4007}.

\bibitem[Trouillon et~al.(2016)Trouillon, Welbl, Riedel, Gaussier, and
  Bouchard]{trouillon2016complex}
Th{\'e}o Trouillon, Johannes Welbl, Sebastian Riedel, {\'E}ric Gaussier, and
  Guillaume Bouchard.
\newblock Complex embeddings for simple link prediction.
\newblock In \emph{International Conference on Machine Learning}, 2016.

\bibitem[Vashishth et~al.(2020)Vashishth, Sanyal, Nitin, and
  Talukdar]{vashishth2019composition}
Shikhar Vashishth, Soumya Sanyal, Vikram Nitin, and Partha Talukdar.
\newblock Composition-based multi-relational graph convolutional networks.
\newblock In \emph{International Conference on Learning Representations}, 2020.

\bibitem[Xu et~al.(2019)Xu, Hu, Leskovec, and Jegelka]{xu2018powerful}
Keyulu Xu, Weihua Hu, Jure Leskovec, and Stefanie Jegelka.
\newblock How powerful are graph neural networks?
\newblock In \emph{International Conference on Learning Representations}, 2019.

\bibitem[Yang et~al.(2017)Yang, Yang, and Cohen]{yang2017differentiable}
Fan Yang, Zhilin Yang, and William~W Cohen.
\newblock Differentiable learning of logical rules for knowledge base
  reasoning.
\newblock \emph{Advances in Neural Information Processing Systems}, 2017.

\bibitem[You et~al.(2021)You, Gomes-Selman, Ying, and
  Leskovec]{you2021identity}
Jiaxuan You, Jonathan~M Gomes-Selman, Rex Ying, and Jure Leskovec.
\newblock Identity-aware graph neural networks.
\newblock In \emph{AAAI Conference on Artificial Intelligence}, 2021.

\bibitem[Zhang et~al.(2021)Zhang, Li, Xia, Wang, and Jin]{zhang2021labeling}
Muhan Zhang, Pan Li, Yinglong Xia, Kai Wang, and Long Jin.
\newblock Labeling trick: A theory of using graph neural networks for
  multi-node representation learning.
\newblock \emph{Advances in Neural Information Processing Systems}, 2021.

\bibitem[Zhang et~al.(2019)Zhang, Tay, Yao, and Liu]{zhang2019quaternion}
Shuai Zhang, Yi~Tay, Lina Yao, and Qi~Liu.
\newblock Quaternion knowledge graph embeddings.
\newblock \emph{Advances in Neural Information Processing Systems}, 32, 2019.

\bibitem[Zhang \& Yao(2022)Zhang and Yao]{zhang2022knowledge}
Yongqi Zhang and Quanming Yao.
\newblock Knowledge graph reasoning with relational digraph.
\newblock In \emph{International World Wide Web Conference}, 2022.

\bibitem[Zhang et~al.(2023)Zhang, Zhou, Yao, Chu, and Han]{zhang2023adaprop}
Yongqi Zhang, Zhanke Zhou, Quanming Yao, Xiaowen Chu, and Bo~Han.
\newblock Adaprop: Learning adaptive propagation for graph neural network based
  knowledge graph reasoning.
\newblock In \emph{Proceedings of the 29th ACM SIGKDD Conference on Knowledge
  Discovery and Data Mining}, pp.\  3446--3457, 2023.

\bibitem[Zhu et~al.(2021)Zhu, Zhang, Xhonneux, and Tang]{zhu2021neural}
Zhaocheng Zhu, Zuobai Zhang, Louis-Pascal Xhonneux, and Jian Tang.
\newblock Neural bellman-ford networks: A general graph neural network
  framework for link prediction.
\newblock \emph{Advances in Neural Information Processing Systems}, 2021.

\bibitem[Zhu et~al.(2022)Zhu, Yuan, Galkin, Xhonneux, Zhang, Gazeau, and
  Tang]{zhu2022scalable}
Zhaocheng Zhu, Xinyu Yuan, Mikhail Galkin, Sophie Xhonneux, Ming Zhang, Maxime
  Gazeau, and Jian Tang.
\newblock A*net: A scalable path-based reasoning approach for knowledge graphs.
\newblock \emph{arXiv preprint arXiv:2206.04798}, 2022.

\end{thebibliography}
\bibliographystyle{iclr2024_conference}

\newpage
\appendix
\onecolumn

\section{Rule analysis}\label{app:sec:rule-analysis}

We first give a simple proof for Proposition~\ref{prop:QL}.
\begin{proof}[proof of Proposition~\ref{prop:QL}]
   Since $R(\mathsf{h},x)$ is equivalent to $\exists z R(z,x)\wedge P_h(z)$, where $P_h(z)$ is the constant predicate only satisfied at entity $h$. Because $R(z,x)$ can describe the rule structure of $(h,R,?)$, $\exists z R(z,x)\wedge P_h(z)$ can describe the rule structure of $(h,R,?)$ as well. 
\end{proof}

We use the notation $G, v \models P_i$ ($G, v \nvDash P_i$) to represent that the unary predicate $P_i(x)$ is (not) satisfied at entity $v$.
\begin{definition}[Definition of graded modal logic]\label{app:def:CML}
	\label{def:gralogic}
	A formula in graded modal logic of KG $G$ is recursively defined as follows:
	\begin{enumerate}
	\item  
	If $\varphi(x) = \top$, $G, v \models \varphi$ if $v$ is an entity in KG;
	
	\item If $\varphi(x) = P_c(x)$, $G, v \models \varphi$ if and only if $v$ has the property $P_c$ or can be uniquely identified by constant $\mathsf{c}$;
	
	\item If $\varphi(x) = \varphi_1(x) \wedge \varphi_2(x)$, $G, v \models \varphi$ if and only if $G, v \models \varphi_1$ and $G, v \models \varphi_2$;
	
	\item If $\varphi(x) = \neg\phi(x)$, $G, v \models \varphi$ if and only if $G, v \nvDash \phi$;
	
	\item If $\varphi(x) = \exists^{\geq N}y, R_j(y, x)\wedge \phi(y)$, $G, v \models \varphi$ if and only if the set of entities $\{u | u\in \mathcal{N}_{R_j}(v) \text{ and } G, u \models \phi \}$ has cardinality at least $N$.
	\end{enumerate}
\end{definition}

\begin{corollary}\label{app:corollary:chain}
	$C_3(\mathsf{h}, x)$ are formulas in $\text{CML}[G, \mathsf{h}]$.
\end{corollary}
\begin{proof}
	$C_3(\mathsf{h}, x)$ is a formula in $\text{CML}[G, \mathsf{h}]$ as it can be recursively defined as follows
	\begin{align*}
		\varphi_1(x) &= P_h (x), \\
		\varphi_2 (x) &= \exists y, R_1 (y, x) \wedge \varphi_1(y), \\
		\varphi_3 (x) &= \exists y, R_2 (y, x) \wedge \varphi_2(y), \\
		C_3(\mathsf{h}, x) &= \exists y, R_3 (y, x) \wedge \varphi_3(y).
	\end{align*}
\end{proof}

\begin{corollary}\label{app:corollary:inductive_1}
	$I_1(\mathsf{h}, x)$ is a formula in $\text{CML}[G, \mathsf{h}]$.
\end{corollary}
\begin{proof}
	$I_1(\mathsf{h}, x)$ is a formula in $\text{CML}[G, \mathsf{h}]$ as it can be recursively defined as follows
	\begin{align*}
		\varphi_1(x) &= P_h (x), \\
		\varphi_2 (x) &= \exists y, R_1 (y, x) \wedge \varphi_1(y), \\
		\varphi_s (x) &= \exists y, R_3(y, x) \wedge \top, \\
		\varphi_3 (x) &= \varphi_s(x) \wedge \varphi_2(x), \\
		I_1(\mathsf{h}, x) &= \exists y, R_2 (y, x) \wedge \varphi_3(y).
	\end{align*}
\end{proof}

\begin{corollary}\label{app:corollary:T}
	$T(\mathsf{h}, x)$ is a formula in $\text{CML}[G, \mathsf{h}]$.
\end{corollary}
\begin{proof}
	By Corollary~\ref{app:corollary:chain}, $C^{\prime}_{3}(\mathsf{h}, x):= \exists z_1 z_2, R_1(\mathsf{h}, z_1) \wedge R_2(z_1, z_2) \wedge R_4(z_2, x)$ and $C_{3}^{\star}(\mathsf{h}, x):= \exists z_1 z_2, R_1(\mathsf{h}, z_1) \wedge R_3(z_1, z_2) \wedge R_5(z_2, x)$ are formulas in $\text{CML}[G, \mathsf{h}]$.
	Thus $T(\mathsf{h}, x) = C^{\prime}_{3}(\mathsf{h}, x) \wedge C_{3}^{\star}(\mathsf{h}, x)$ is a formula in $\text{CML}[G, \mathsf{h}]$.
\end{proof}

\begin{corollary}\label{app:corollary:labeling}
	$U'(\mathsf{h}, x)$ is a formula in $\text{CML}[G, \mathsf{h}, \mathsf{c}]$.
\end{corollary}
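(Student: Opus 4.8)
The plan is to follow the pattern of Corollaries~\ref{app:corollary:chain} and~\ref{app:corollary:inductive}: build a finite chain of sub-formulas, each produced from earlier ones by one of the constructors of Definition~\ref{app:def:CML} (equivalently \eqref{eq:CML}), whose last member is logically equivalent over $\mathcal{K}$ to the displayed $U'(\mathsf{h}, x)$. The only non-syntactic tool needed is the equivalence recorded in Section~\ref{sec:logic-of-KG}, $R_j(\mathsf{c}, x) \equiv \exists y\,(R_j(y, x) \wedge P_c(y))$, which converts a constant appearing in the first argument of a binary predicate into a guarded existential over the neighbour relation $\mathcal{N}_{R_j}$.

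Concretely, I would first absorb the closed conjunct $R_1(\mathsf{h}, \mathsf{c})$ into an anchor formula evaluated at the entity named by $\mathsf{c}$,
\begin{align*}
	\alpha(z) := P_c(z) \wedge \exists w\,\big( R_1(w, z) \wedge P_h(w) \big).
\end{align*}
Since the constant predicates $P_c$ and $P_h$ are satisfied only at the entities identified by $\mathsf{c}$ and $\mathsf{h}$, the formula $\alpha(z)$ holds exactly when $z$ is that entity and $(h, R_1, c) \in \mathcal{F}$; hence $\alpha$ is satisfiable iff $R_1(\mathsf{h}, \mathsf{c})$ is true, and in that case it pins $z$ to $c$. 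I then propagate this anchor along the two branches of the rule structure, each step being a guarded (counting) existential allowed by \eqref{eq:CML}:
\begin{align*}
	\beta_2(z_2) &:= \exists z\,\big( R_2(z, z_2) \wedge \alpha(z) \big), \\
	\beta_3(z_3) &:= \exists z\,\big( R_3(z, z_3) \wedge \alpha(z) \big), \\
	\gamma_2(x) &:= \exists z_2\,\big( R_4(z_2, x) \wedge \beta_2(z_2) \big), \\
	\gamma_3(x) &:= \exists z_3\,\big( R_5(z_3, x) \wedge \beta_3(z_3) \big),
\end{align*}
and finally set $U'(\mathsf{h}, x) := \gamma_2(x) \wedge \gamma_3(x)$. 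Every sub-formula here belongs to $\text{CML}[\mathcal{K}, \mathsf{h}, \mathsf{c}]$ by construction.

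It then remains to verify that this CML formula is equivalent to the definition of $U'(\mathsf{h}, x)$ in the statement. Unfolding $\gamma_2$ and using that $P_c$ forces $z = c$ and $P_h$ forces $w = h$ gives $\gamma_2(x) \equiv R_1(\mathsf{h}, \mathsf{c}) \wedge \exists z_2\,(R_2(\mathsf{c}, z_2) \wedge R_4(z_2, x))$, and symmetrically $\gamma_3(x) \equiv R_1(\mathsf{h}, \mathsf{c}) \wedge \exists z_3\,(R_3(\mathsf{c}, z_3) \wedge R_5(z_3, x))$; conjoining the two and using that the $z_2$- and $z_3$-blocks share no free variable (so $\exists z_2 \exists z_3$ distributes over their conjunction) recovers exactly the displayed formula. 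I expect the only delicate point to be the closed conjunct $R_1(\mathsf{h}, \mathsf{c})$: it cannot appear verbatim inside the grammar \eqref{eq:CML}, since every CML formula carries the free variable $x$, so it must be smuggled into $\alpha$ through $P_c$ — and doing so has the added benefit of forcing the two branches $\gamma_2$ and $\gamma_3$ to be anchored at the \emph{same} entity $c$, which is precisely what a plain bisimulation-invariant formula could not guarantee for $U(\mathsf{h}, x)$. Once $U'(\mathsf{h}, x) \in \text{CML}[\mathcal{K}, \mathsf{h}, \mathsf{c}]$ is established, Theorem~\ref{theorem:main} (through Theorem~\ref{theorem:label}) yields that it is captured by QL-GNN equipped with the constant $\mathsf{c}$.
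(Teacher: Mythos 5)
Your construction is, up to renaming of variables, identical to the paper's: your $\alpha$ is the paper's $\varphi_3 = \varphi_2 \wedge \varphi_c$, your $\beta_2,\gamma_2$ (resp.\ $\beta_3,\gamma_3$) are the paper's $\varphi'_4,\varphi'_5$ (resp.\ $\varphi''_4,\varphi''_5$), and the final conjunction is the same. The proof is correct and takes essentially the same approach; your added verification that unfolding the guarded existentials recovers the displayed $U'(\mathsf{h},x)$ is a welcome elaboration of the paper's one-line remark that $\mathsf{c}$ pins both branches to the same entity.
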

\begin{proof}
	$U'(\mathsf{h}, x)$ is a formula in $\text{CML}[G, \mathsf{h}, \mathsf{c}]$ as it can be recursively defined as follows
	\begin{align*}
		\varphi_1(x) &= P_h(x), \varphi_c(x) = P_c(x), \\
		\varphi_2(x) &= \exists y, R_1(y, x) \wedge \varphi_1(y), \\
		\varphi_3(x) &= \varphi_2(x) \wedge \varphi_c(x), \\
		\varphi'_4(x) &= \exists y, R_2(y, x) \wedge \varphi_3(y), \\
		\varphi'_5(x) &= \exists y, R_4(y, x) \wedge \varphi'_4(y), \\
		\varphi''_4(x) &= \exists y, R_3(y, x) \wedge \varphi_3(y), \\
		\varphi''_5(x) &= \exists y, R_5(y, x) \wedge \varphi''_4(y), \\
		U'(\mathsf{h}, x) &= \varphi'_5(x) \wedge \varphi''_5(x)
	\end{align*}
	where the constant $\mathsf{c}$ ensures that there is only one entity satisfied for unary predicate $\varphi_3(x)$.
\end{proof}

\paragraph{Example of rules} We can find some relations in reality corresponding to rules in Figure~\ref{fig:example}. Here are two examples of $C_3$ and $I_1$:
\begin{itemize}
	\item Relation \texttt{nationality} ($C_3$): $\text{Einstein}\rightarrow_{\text{born\_in}}\text{Ulm} \rightarrow_{\text{hometown\_of}}\text{Born}\rightarrow_{nationality}\text{Germany}$;
	\item Relation \texttt{father} ($I_1$): $\text{A}\rightarrow_{\text{spouse}}\text{B}\rightarrow_{\text{parent}}\text{C}$ and $\text{D}\rightarrow_{\text{sisterhood}}\text{B}$.
\end{itemize}

{
\paragraph{Rule structures in real datasets}\label{app:exp:real-rules}
To show that the expressivity is meaningful in our paper, we select three rule structures from Family and FB15k-237 in Figure~\ref{fig:real-rules} to show the existence of rule structures in real datasets. With the definition of CML, the rule structure in Figure~\ref{fig:real-rules}(a) is not a formula in CML and rule structures in Figure~\ref{fig:real-rules}(b) and \ref{fig:real-rules}(c) are formulas in CML. The real rules shows that rules defined by CML is common in real-world datasets and the rules beyond CML also exist, which highlights the importance of our work.

\begin{figure}[h]
	\centering
	\includegraphics[width=1.0\textwidth]{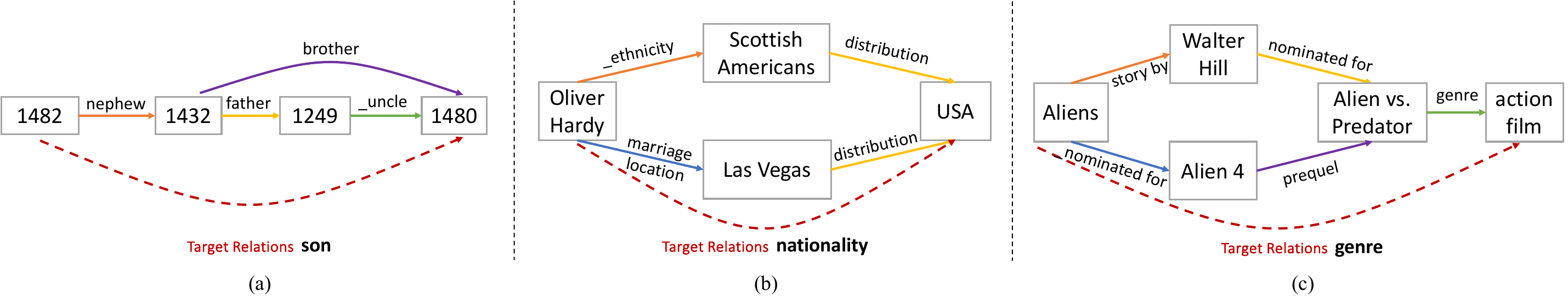}
	\vspace{-15px}
	\caption{Some rule structures in real datasets. The rule structure (a) is from Family dataset and is not a rule formula in $\text{CML}[G, \mathsf{h}]$, which cannot not be learned by QL-GNN. The rule structures (b) and (c) are from FB15k-237 dataset and are rule formulas in $\text{CML}[G, \mathsf{h}]$, which can be learned by QL-GNN.}
	\label{fig:real-rules}
\end{figure}

}

\paragraph{Summary}
Here we give Table~\ref{tab:case} to illustrate the correspondence between GNNs for KG reasoning, rule structures, and theories presented in our paper.
\begin{table*}[h]
	\centering
	\caption{Whether GNNs investigated in our paper can learn the rule formulas in Figure~\ref{fig:example} and \ref{fig:sec5} and the exemplar methods of these GNNs. \cmark (\xmark) mean the corresponding GNN can(not) lean the rule formula.}
	\small
	\begin{tabular}{c|cccc|c|c}
		\toprule
		Rule formula     & $C_3(\mathsf{h},x)$ & $I_1(\mathsf{h},x)$ & $T(\mathsf{h},x)$ & {$U(\mathsf{h},x)$} & Theoretical result & Exemplar Methods \\ \midrule
		Classical              & \xmark              & \xmark        & \xmark         & \xmark     & Theorem~\ref{theorem:EA} & R-GCN, CompGCN       \\  
		QL-GNN               & \cmark              & \cmark    & \cmark             & \xmark      &  Theorem~\ref{theorem:QL} & NBFNet, RED-GNN     \\ 
		EL-GNN & {\cmark}           & \cmark       & \cmark            & \cmark    &    Proposition~\ref{proposition:label} & EL-NBFNet/RED-GNN     \\ \bottomrule
	\end{tabular}
	\label{tab:case}
	\vspace{-10px}
\end{table*}

\section{Relation between QL-GNN and NBFNet/RED-GNN}\label{app:sec:ql-gnn}
In this part, we show that NBFNet and RED-GNN are special cases of QL-GNN in Table~\ref{tab:QL-NBF} and \ref{tab:QL-RED} respectively.
\begin{table}[h]
   \caption{NBFNet is a special case of QL-GNN.}
   \label{tab:QL-NBF}
   \centering
   \small
   \begin{tabular}{c|c}
   \toprule
   & NBFNet \\
   \midrule
   Query representation     & Relation embedding     \\
   Non-query representation & 0  \\
   MPNN                     &     $\textsc{Aggregate}\left(\left\{\textsc{Message}\left(\bm{h}^{(t-1)}_x,   {\bm{w}}_q(x, r, v)\right) \middle\vert (x, r, v)   \in \gE(v)\right\} \cup \left\{\bm{h}^{(0)}_v\right\}\right)$    \\
   Triplet score            & Feed-forward network \\
   \bottomrule
   \end{tabular}
\end{table}

\begin{table}[h]
   \caption{RED-GNN is a special case of QL-GNN.}
   \label{tab:QL-RED}
   \centering
   \small
   \begin{tabular}{c|c}
   \toprule
      & RED-GNN   \\
                              \midrule
   Query representation    & 0    \\
   Non-query representation & NULL    \\
   MPNN   &   $\delta\Big(\sum\nolimits_{\{e_s,r\} : (e_s, r, e)\in \mathcal E_{e_q}^\ell}  \varphi\big({\bm h}^{\ell-1}_{e_q,e_s}, \bm h_r^\ell\big)\Big)$   \\
   Triplet score  & Linear transformation \\
   \bottomrule
   \end{tabular}
\end{table}
\section{Proof}\label{app:proof}

We use the notation $G, (h,t) \models R_j$ ($G, (h,t) \nvDash R_j$) to denote $R_j(x,y)$ is (not) satisfied at $h,t$.

\subsection{Base theorem: what kind of logical formulas can MPNN backbone for KG learn?}\label{app:what-kind-of-logic-can-MPNN-lean}

In this section, we analyze the expressivity of MPNN backbone (\ref{eq:GNN}) for learning logical formulas in KG.
This section is the extension of \citet{barcelo2020logical} to KG.

In a KG $G=(\mathcal{V}, \mathcal{E}, \mathcal{R})$,
MPNN with $L$ layers is a type of neural network that applies graph $G$ 
and initial entity representation $\mathbf{e}_v^{(0)}$ to learn the representations $\mathbf{e}_v^{(L)}, v\in \mathcal{V}$. 
MPNN employs message-passing mechanisms~\citep{gilmer2017neural} to propagate information between entities in graph.
The $k$-th layer of MPNN updates the entity representation via the following message-passing formula
\begin{align*}
	\mathbf{e}_v^{(k)}
	=
	\delta 
	\Big( 
	\mathbf{e}_v^{(k-1)}, \phi\left(\{\{ \psi(\mathbf{e}_u^{(k-1)}, R) | 
	u \in \mathcal{N}_R(v), R\in \mathcal{R} \}\}\right) 
	\Big),
\end{align*}
where $\delta$ and $\phi$ are combination and aggregation functions respectively, 
$\psi$ is the message function encoding the relation $R$ and entity $u$ neighboring to $v$, 
$\{\{\cdots\}\}$ is a multiset, 
and 
$\mathcal{N}_R(v)$ is the neighboring entity set $\{ u | (u,R,v) \in \mathcal{E} \}$.

To understand how MPNN can learn logical formulas, we regard logical formula $\varphi(x)$ as a binary classifier indicating whether $\varphi(x)$ is satisfied at entity $x$.
Then, we commence with the following definition.
\begin{definition}
   A MPNN captures a logical formula $\varphi(x)$ if and only if given any graph $G$, the MPNN representation can be mapped to a binary value, where \texttt{True} indicates that $\varphi(x)$ satisfies on entity $x$, while \texttt{False} does not satisfy.
\end{definition}

According to the above definition, MPNN can learn logical formula in KG by encoding whether these logical formulas is satisfied in the representation of the corresponding entity.
For example, 
if MPNN can learn a logical formula $\varphi(x)$, it implies that $\mathbf{e}_v^{(L)}$ can be mapped to a binary value \texttt{True}/\texttt{False} by a function indicating whether $\varphi(x)$ is satisfied at entity $v$.
Previous work~\citep{barcelo2020logical} has proven that
vanilla GNN for single-relational graph can learn the logical formulas from graded modal logic~\citep{de2000note,otto2019graded} 
(a.k.a., Counting extension of Modal Logic, CML). 
In this section, 
we will present a similar theory of MPNN for KG.

The insight of MPNN's ability to learn formulas in CML lies in the alignment between certain CML formulas and the message-passing mechanism, which also holds for KG. Specifically, $\exists^{\geq N}y \left( R_j(y, x) \wedge \varphi(y) \right)$ is the formula aligned with MPNN's message-passing mechanism and allows to check the property of neighbor $y$ of entity variable $x$.
We use notation $\text{CML}[G]$ to denote
CML of a graph $G$.
Then, 
we give the following theorem to find out the kind of logical formula MPNN~(\ref{eq:GNN}) can learn in KG.

\begin{theorem}\label{theorem:main}
In a KG $G$, 
a logical formula $\varphi(x)$ is learned by MPNN (\ref{eq:GNN}) from its representations 
if and only if $\varphi(x)$ is a formula in $\text{CML}[G]$.
\end{theorem}

Our theorem can be viewed as an extension of Theorem 4.2 in \citet{barcelo2020logical} to KG and is the elementary tool for analyzing the expressivity of GNNs for KG reasoning.
The proof of Theorem~\ref{theorem:main} is in Appendix~\ref{app:proof} and employs novel techniques that specifically account for relation types.
Our theorem shows that
CML of KG is the tightest subclass of logic that MPNN can learn.
Similarly, our theorem is about the ability to implicitly learn logical formulas by MPNN rather than explicitly extracting them.

\subsection{Proof of Theorem~\ref{theorem:main}}

The backward direction of Theorem~\ref{theorem:main} is proven by constructing a MPNN that can learn any formula $\varphi(x)$ in CML. 
The forward direction relies on the results from recent theoretical results in \citet{otto2019graded}. 
Our theorem can be seen as an extension of Theorem 4.2 in \citet{barcelo2020logical} to KG.

We first prove the backward direction of Theorem~\ref{theorem:main}.
\begin{lemma}\label{app:lemma:backward}
    Each formula $\varphi(x)$ in CML can be learned by MPNN~(\ref{eq:GNN}) from its entity representations.
\end{lemma}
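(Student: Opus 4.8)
The plan is to prove Lemma~\ref{app:lemma:backward} by structural induction on the formula $\varphi(x)$ in CML, following the recursive definition in \eqref{eq:CML} (equivalently Definition~\ref{app:def:CML}), and at each step exhibiting a concrete choice of the GNN ingredients $\delta,\phi,\psi$ in \eqref{eq:GNN} together with a layer count, so that some coordinate of the final representation $\mathbf{e}_v^{L}$ is an indicator of whether $\mathcal{K},v\models\varphi$. Since any finite formula uses only finitely many sub-formulas, I would first enumerate the (finite) set of sub-formulas of $\varphi$ and assign each one a dedicated coordinate in a sufficiently high-dimensional hidden state; the number of GNN layers $L$ can be taken to be the nesting depth of the $\exists^{\geq N}$ modality in $\varphi$. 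The invariant to maintain is: after $k$ layers, for every sub-formula $\psi$ of modal-depth $\le k$, the coordinate reserved for $\psi$ equals $1$ at node $v$ iff $\mathcal{K},v\models\psi$ and $0$ otherwise.

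First I would handle the base cases. For $\varphi(x)=\top$, the reserved coordinate is simply the constant $1$, realizable by a bias in $\delta$. For $\varphi(x)=P_i(x)$ (including a constant predicate $P_c(x)$), the value is read off directly from the initial feature $\mathbf{e}_v^0$: as noted in the first Remark after Theorem~\ref{theorem:main}, one assigns special/one-hot initial vectors encoding exactly which atomic predicates (and which constant identity) hold at each node, so the indicator is available at layer $0$ and can be carried forward unchanged by $\delta$. For the Boolean cases, if the coordinates for $\varphi_1,\varphi_2$ are already correctly computed at layer $k$, then $\neg\varphi_1$ is $1-(\text{coord of }\varphi_1)$ and $\varphi_1\wedge\varphi_2$ is $\max(0,(\text{coord of }\varphi_1)+(\text{coord of }\varphi_2)-1)$; both are affine maps followed by a ReLU/clipping nonlinearity applied inside $\delta$, and crucially neither uses the aggregation $\phi$, so they cost no extra layer. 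The technical care here is that these Boolean combinators must be applied within the same layer that produced their inputs while still preserving all the other coordinates; a block-diagonal affine map plus truncated-linear activation does exactly this.

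The main case, and the crux of the argument, is the graded modality $\varphi(x)=\exists^{\geq N}y\,(R_j(y,x)\wedge\phi(y))$. Here I would use the inductive hypothesis that after $k$ layers the coordinate for $\phi$ is the exact indicator of $\phi$ at every node. In layer $k+1$ I pick the message function $\psi(\mathbf{e}_u^{k},R)$ to select, when $R=R_j$, the $\phi$-coordinate of $u$ (and emit $0$ for $R\neq R_j$), the aggregation $\phi$ to be multiset \emph{sum} over $\mathcal{N}_R(v)$ for all relations $R$, so that the aggregated quantity at $v$ equals $\bigl|\{u\in\mathcal{N}_{R_j}(v): \mathcal{K},u\models\phi\}\bigr|$, and finally $\delta$ to threshold this count at $N$, i.e.\ output $1$ iff the count is $\ge N$, which is again a truncated affine function. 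This requires that $\phi$ (sum-aggregation) be admissible and that $\psi,\delta$ be allowed to depend on the relation type $R$ — both are granted by the general form \eqref{eq:GNN}. One subtlety to flag: the GNN in \eqref{eq:GNN} aggregates over in-neighbors $\mathcal{N}_R(v)=\{u:(u,R,v)\in\mathcal{F}\}$, which matches the direction $R_j(y,x)$ appearing in \eqref{eq:CML}, so no direction-reversal is needed; if it were, one would simply augment $\mathcal{R}$ with inverse relations, but I would note this is unnecessary here.

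The hard part will be bookkeeping rather than mathematics: one must (i) realize the Boolean operations and the threshold-at-$N$ as genuine instances of the allowed $\delta$ (I would make explicit that a single ReLU-type nonlinearity suffices, e.g.\ $\sigma(t)=\min(1,\max(0,t))$, and that this makes all the required maps exact on integer inputs, not merely approximate), (ii) carry every previously-computed sub-formula coordinate forward unchanged through each subsequent layer via the identity block of $\delta$, so that at layer $L$ all of them are simultaneously available, and (iii) observe that the combined $\psi$ must produce, in a single vector, the contributions needed for \emph{all} modal sub-formulas of a given depth at once, by stacking the selectors for the various $(R_j,\phi)$ pairs. Once the invariant is established for $k=L$, the coordinate reserved for $\varphi$ itself is the desired exact indicator, which by definition means $\varphi(x)$ is captured by the GNN representation, completing the proof of Lemma~\ref{app:lemma:backward}. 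I would close by remarking that this lemma supplies the backward (``if'') direction of Theorem~\ref{theorem:main}; the forward direction, invoking \citet{otto2019graded}, is argued separately.
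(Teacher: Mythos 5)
Your proposal is correct and follows essentially the same route as the paper's proof: one dedicated coordinate per sub-formula, initial features encoding the atomic (and constant) predicates, the truncated-linear activation $\sigma(t)=\min(1,\max(0,t))$ realizing conjunction, negation, and the threshold-at-$N$ as exact affine-plus-clip maps on integer inputs, sum aggregation over $\mathcal{N}_{R_j}(v)$ for the graded modality, and an inductive invariant that each coordinate is the exact indicator of its sub-formula. The only cosmetic difference is that you count layers by modal depth (folding Boolean combinators into the same layer via a richer $\delta$) whereas the paper spends one layer per sub-formula in a fixed linear GNN; this does not change the substance of the argument.
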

\begin{proof}
    Let $\varphi(x)$ be a formula in CML. We decompose $\varphi$ into a series of sub-formulas $\text{sub}[\varphi]=(\varphi_1, \varphi_2, \cdots, \varphi_L)$ where $\varphi_k$ is a sub-formula of $\varphi_\ell$ if $k \leq \ell$ and $\varphi = \varphi_L$. 
	Assume the MPNN representation $\mathbf{e}_v^{(i)}\in\mathbb{R}^L, v\in\mathcal{V}, i=1\cdots L$.
	In this proof, the theoretical analysis will based on the following simple choice of (\ref{eq:GNN})
	\begin{align}\label{app:eq:simple-gnn}
		\mathbf{e}_v^{(i)}=\sigma \left( \mathbf{e}_v^{(i-1)}\mathbf{C} + \sum_{j=1}^r\sum_{u\in\mathcal{N}_{R_j}(v)} \mathbf{e}_u^{(i-1)} \mathbf{A}_{R_j} + \mathbf{b} \right)
	\end{align}
	with $\sigma=\min(\max(0,x), 1)$, $\mathbf{A}_{R_j}, \mathbf{C}\in\mathbb{R}^{L\times L}$ and $\mathbf{b}\in\mathbb{R}^L$.
	The entries of the $\ell$-th columns of $\mathbf{A}_{R_j}, \mathbf{C}$, and $\mathbf{b}$ depend on the sub-formulas of $\varphi$ as follows:
    \begin{itemize}
        \item \textbf{Case 0.} if $\varphi_\ell(x)=P_\ell(x)$ where $P_
		\ell$ is a unary predicate, $\mathbf{C}_{\ell \ell}=1$;
        \item \textbf{Case 1.} if $\varphi_\ell(x)=\varphi_j(x) \wedge \varphi_k(x)$, $\mathbf{C}_{j\ell}=\mathbf{C}_{k\ell}=1$ and $\mathbf{b}_{\ell}=-1$;
        \item \textbf{Case 2.} if $\varphi_\ell = \neg \varphi_{k}(x)$, $\mathbf{C}_{k\ell}=-1$ and $\mathbf{b}_\ell=1$;
        \item \textbf{Case 3.} if $\varphi_\ell(x)=\exists^{\geq N}y \left( R_j(y,x) \wedge \varphi_k(y) \right)$, $\left( \mathbf{A}_{R_j} \right)_{k\ell} = 1$ and $\mathbf{b}_\ell = -N + 1$.
    \end{itemize}
    with all the other values set to 0.

    Before the proof, for every entity $v\in\mathcal{V}$, the initial representation $\mathbf{e}_v^{(0)}=(t_1, t_2, \cdots, t_n)$ has $t_\ell=1$ if the sub-formula $\varphi_\ell = P_\ell(x)$ is satisfied at $v$, and $t_\ell=0$ otherwise.

    Let $G=(\mathcal{V},\mathcal{E}, \mathcal{R})$ be a KG. 
	We next prove that for every $\varphi_\ell \in \text{sub}[\varphi]$ and every entity $v\in\mathcal{V}$ it holds that
    \begin{equation*}
        \left( \mathbf{e}_v^{(i)} \right)_\ell = 1 \quad
        \text{if} \quad G, v \models \varphi_\ell,\quad \text{and} \quad  \left( \mathbf{e}_v^{(i)} \right)_\ell = 0 \quad \text{otherwise},
    \end{equation*}
	for every $\ell \leq i \leq L$.

	Now, we prove this by induction of the number of formulas in $\varphi$.

    \textbf{Base case:} One sub-formula in $\varphi$. In this case, the formula is an atomic predicate $\varphi=\varphi_\ell(x)=P_\ell(x)$. 
	Because $\mathbf{C}_{\ell \ell}=1$ and $(\mathbf{e}_v^{(0)})_\ell=1, (\mathbf{e}_v^{(0)})_i=0, i\neq \ell$, we have $(\mathbf{e}_v^{(1)})_\ell=1$ if $G, v \models \varphi_\ell$ and $(\mathbf{e}_v^{(1)})_\ell=0$ otherwise.
	For $i\geq 1$, $\mathbf{e}_v^{(i)}$ satisfies the same property.

    \textbf{Induction Hypothesis:} $k$ sub-formulas in $\varphi$ with $k < \ell$. Assume $\left(\mathbf{e}_v^{(i)} \right)_{k}=1$ if $G, v \models \varphi_k$ and $\left(\mathbf{e}_v^{(i)} \right)_{k}=0$ otherwise for $k \leq i \leq L$.

	\textbf{Proof:} $\ell$ sub-formulas in $\varphi$. Let $i\geq \ell$. 
    Case 1-3 should be considered. 

    Case 1. Let $\varphi_\ell(x)=\varphi_j(x) \wedge \varphi_k(x)$. Then $\mathbf{C}_{j\ell}=\mathbf{C}_{k\ell}=1$ and $\mathbf{b}_{\ell}=-1$.
	Then we have
	\begin{align*}
		(\mathbf{e}_v^{(i)})_\ell = \sigma \left( (\mathbf{e}_v^{(i-1)})_j + (\mathbf{e}_v^{(i-1)})_k - 1 \right).
	\end{align*}
	By the induction hypothesis, $(\mathbf{e}_v^{(i-1)})_j=1$ if only if $G, v \models \varphi_j$ and $(\mathbf{e}_v^{(i-1)})_j=0$ otherwise.
	Similarly, $(\mathbf{e}_v^{(i-1)})_k=1$ if and only if $G, v \models \varphi_k$ and $(\mathbf{e}_v^{(i-1)})_k=0$ otherwise.
	Then we have $(\mathbf{e}_v^{(i)})_\ell=1$ if and only if $(\mathbf{e}_v^{(i-1)})_j + (\mathbf{e}_v^{(i-1)})_k - 1 \geq 1$, which means $(\mathbf{e}_v^{(i-1)})_j=1$ and $(\mathbf{e}_v^{(i-1)})_k=1$.
	Then $(\mathbf{e}_v^{(i)})_\ell=1$ if and only if $G, v \models \varphi_j$ and $G, v \models \varphi_k$, i.e., $G, v \models \varphi_\ell$, and $(\mathbf{e}_v^{(i)})_\ell=0$ otherwise.

	Case 2. Let $\varphi_\ell(x)=\neg \varphi_k(x)$. Because of $\mathbf{C}_{k\ell}=-1$ and $\mathbf{b}_\ell=1$, we have
	\begin{align*}
		(\mathbf{e}_v^{(i)})_\ell = \sigma \left( -(\mathbf{e}_v^{(i-1)})_k+1 \right).
	\end{align*}
	By the induction hypothesis, $(\mathbf{e}_v^{(i-1)})_k=1$ if and only if $G, v \models \varphi_k$ and $(\mathbf{e}_v^{(i-1)})_k=0$ otherwise.
	Then we have $(\mathbf{e}_v^{(i)})_\ell=1$ if and only if $-(\mathbf{e}_v^{(i-1)})_k + 1 \geq 1$, which means $(\mathbf{e}_v^{(i-1)})_k=0$.
	Because $(\mathbf{e}_v^{(i-1)})_k=0$ if and only if $G, v \nvDash \varphi_k$,
	we have $(\mathbf{e}_v^{(i)})_\ell=1$ if and only if $G, v \nvDash \varphi_k$, i.e., $G, v \models \varphi_\ell$, and $(\mathbf{e}_v^{(i)})_\ell=0$ otherwise.

	Case 3. Let $\varphi_\ell(x)=\exists^{\geq N}y \left( R_j(y,x) \wedge \varphi_k(y) \right)$. Because of $\left( \mathbf{A}_{R_j} \right)_{k\ell} = 1$ and $\mathbf{b}_\ell = -N + 1$, we have
	\begin{align*}
		(\mathbf{e}_v^{(i)})_\ell = \sigma \left( \sum_{u\in\mathcal{N}_{R_j}(v)} (\mathbf{e}_u^{(i-1)})_k - N + 1 \right).
	\end{align*}
	By the induction hypothesis, $(\mathbf{e}_u^{(i-1)})_k=1$ if and only if $G, u \models \varphi_k$ and $(\mathbf{e}_u^{(i-1)})_k=0$ otherwise.
	Let $m = |\{ u | u \in \mathcal{N}_{R_j}(v) \text{ and } G, u \models \varphi_k \}|$.
	Then we have $(\mathbf{e}_v^{(i)})_\ell=1$ if and only if $\sum_{u\in\mathcal{N}_{R_j}(v)} (\mathbf{e}_u^{(i-1)})_k - N + 1 \geq 1$, which means $m \geq N$.
	Because $G, u \models \varphi_k$, $u$ is connected to $v$ with relation $R_j$, and $m\geq N$, we have $(\mathbf{e}_v^{(i)})_\ell=1$ if and only if $G, v \models \varphi_\ell$ and $(\mathbf{e}_v^{(i)})_\ell=0$ otherwise.

	To learn a logical formula $\varphi(x)$, we only apply a linear classifier to $\mathbf{e}_v^{(L)}, v\in\mathcal{V}$ to extract the component of $\mathbf{e}_v^{(L)}$ corresponding to $\varphi$.
	If $G, v \models \varphi$, the value of the corresponding extracted component is 1.

\end{proof}

Next, we prove the forward direction of Theorem~\ref{theorem:main}.
\begin{theorem}\label{app:theorem:forward}
    A formula $\varphi(x)$ is learned by MPNN~(\ref{eq:GNN}) if it can be expressed as a formula in CML.
\end{theorem}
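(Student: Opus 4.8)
The plan is to prove this by an explicit construction, exhibiting for each CML formula $\varphi(x)$ a concrete instance of the message-passing scheme whose representations encode the truth of $\varphi$ at every entity. This constructive route coincides with the one already used for Lemma~\ref{app:lemma:backward}, and it is purely syntactic: unlike the converse inclusion, it requires no appeal to graded bisimulation or to the characterization of \citet{otto2019graded}. First I would fix a topological enumeration $\varphi_1,\dots,\varphi_L=\varphi$ of the sub-formulas of $\varphi$, ordered so that every sub-formula precedes each formula containing it, allocate one coordinate per sub-formula, and work with the simple GNN of Eq.~\eqref{app:eq:simple-gnn} on representations in $\mathbb{R}^L$.

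Next I would read off the weight matrices $\mathbf{C}, \mathbf{A}_{R_j}, \mathbf{b}$ directly from the outermost connective of each $\varphi_\ell$, following the four cases: an atomic predicate copies its initial coordinate ($\mathbf{C}_{\ell\ell}=1$); a conjunction $\varphi_j\wedge\varphi_k$ is realized by the sum of coordinates $j,k$ shifted by $-1$; a negation by $1$ minus the child coordinate; and a graded existential $\exists^{\geq N}y\,(R_j(y,x)\wedge\varphi_k(y))$ by summing the child coordinate over the $R_j$-neighborhood and shifting by $-N+1$. The initial feature $\mathbf{e}_v^0$ records which atomic predicates hold at $v$. The point is that the clamped activation $\sigma=\min(\max(0,x),1)$ collapses each of these affine combinations onto $\{0,1\}$ and turns them into the intended Boolean and threshold operations.

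The core of the argument is the invariant that, for every sub-formula $\varphi_\ell$ and every layer $i$ with $\ell\le i\le L$, the coordinate $(\mathbf{e}_v^i)_\ell$ equals $1$ exactly when $\mathcal{K},v\models\varphi_\ell$ and $0$ otherwise. I would prove this by induction on the number of sub-formulas, checking the three recursive cases against the semantics in Definition~\ref{app:def:CML}; the graded case uses that the neighbor-sum $\sum_{u\in\mathcal{N}_{R_j}(v)}(\mathbf{e}_u^{i-1})_k$ counts precisely the $R_j$-neighbors of $v$ satisfying $\varphi_k$, so thresholding at $N$ recovers the cardinality condition. Reading coordinate $L$ with a linear classifier then captures $\varphi$, establishing that every CML formula is captured by the GNN.

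The main obstacle I anticipate is the layer-index bookkeeping rather than any single case: one must ensure that each coordinate $\ell$ first becomes correct no later than layer $\ell$ and is then \emph{preserved} at every subsequent layer, so that a sub-formula's children are already stabilized by the time its own layer evaluates them. This is what forces the quantified invariant ``for every $\ell\le i\le L$'' together with the topological ordering; because every coordinate is recomputed from its children's current values at each layer, the induction hypothesis keeps the children correct and propagates correctness upward. A secondary point worth verifying is that Eq.~\eqref{app:eq:simple-gnn} is genuinely an instance of the general scheme in Eq.~\eqref{eq:GNN}, so that a capturing GNN in the restricted form yields one in the general form as well.
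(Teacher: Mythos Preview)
Your construction is correct for the statement as literally written, but that statement contains a typo: ``captured \emph{if} CML'' is the backward direction (CML $\Rightarrow$ captured), which is already Lemma~\ref{app:lemma:backward}, and indeed your proposal reproduces that lemma's proof almost verbatim---you say so yourself. The paper, however, introduces Theorem~\ref{app:theorem:forward} with ``Next, we prove the forward direction,'' and its one-line proof reads ``obtained directly from Lemma~\ref{app:lemma:forward-contrary-proposition}.'' The intended claim is therefore captured $\Rightarrow$ CML; the word should be ``only if.''

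The paper's argument is precisely the one you explicitly set aside. Lemma~\ref{app:lemma:forward-contrary-proposition} assumes $\varphi$ is not equivalent to any CML formula, invokes Theorem~\ref{app:theorem:otto} (Otto's graded-bisimulation characterization) to obtain two pointed knowledge graphs $(\mathcal{K},v)$ and $(\mathcal{K}',v')$ with isomorphic unravelings at every depth but $\mathcal{K},v\models\varphi$ and $\mathcal{K}',v'\nvDash\varphi$, and then uses Lemma~\ref{app:lemma:WL-unr} to conclude that WL---and hence any GNN of the form~\eqref{eq:GNN}---assigns $v$ and $v'$ identical representations, so no such GNN can capture $\varphi$. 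Your constructive proof, while sound, contributes nothing toward this direction and simply duplicates Lemma~\ref{app:lemma:backward}.
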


To prove Theorem~\ref{app:theorem:forward}, we introduce Definition~\ref{app:def:unravelling}, Lemma~\ref{app:lemma:WL-unr}, Theorem~\ref{app:theorem:otto}, and Lemma~\ref{app:lemma:forward-contrary-proposition}.
\begin{definition}[Unraveling tree]\label{app:def:unravelling}
    Let $G$ be a KG, $v$ be entity in $G$, and $L\in\mathbb{N}$. The unravelling of $v$ in $G$ at depth $L$, denoted  by $\text{Unr}_{G}^L (v)$, is a tree composed of
    \begin{itemize}
        \item a node $(v, R_1, u_1, \cdots, R_i, u_i)$ for each path $(v, R_1, u_1, \cdots, R_i, u_i)$ in $G$ with $i\leq L$,
        \item an edge $R_i$ between $(v, R_1, u_1, \cdots, R_{i-1}, u_{i-1})$ and $(v, R_1, u_1, \cdots, R_i, u_i)$ when $(u_{i}, R_i, u_{i-1})$ is a triplet in $G$ (assume $u_0$ is $v$), and
        \item each node $(v, R_1, u_1, \cdots, R_i, u_i)$ has the same properties as $u_i$ in $G$.
    \end{itemize}
\end{definition}

\begin{lemma}\label{app:lemma:WL-unr}
    Let $G$ and $G'$ be two KGs, $v$ and $v'$ be two entities in $G$ and $G'$ respectively. Then for every $L \in \mathbb{N}$, the RWL test~\citep{barcelo2022weisfeiler} assigns the same color/hash to $v$ and $v'$ at round $L$ if and only if there is an isomorphism between $\text{Unr}_{G}^L(v)$ and $\text{Unr}_{G'}^L(v')$ sending $v$ to $v'$.
\end{lemma}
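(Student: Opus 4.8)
I would prove the equivalence by induction on $L$, using the recursive rooted-tree structure of the unraveling together with the defining property of the relational $1$-WL update, namely that the color of an entity at round $k$ is a hash of its own initial color and the multiset of (relation, neighbour-color-at-round-$(k-1)$) pairs. The proof is conceptually the classical ``unraveling $\Leftrightarrow$ WL / bisimulation'' argument, adapted to the heterogeneous (relation-labeled) setting of knowledge graphs.

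\textbf{Structural preliminaries.} First I would record two elementary facts about $\operatorname{Unr}$. (a) Truncation compatibility: $\operatorname{Unr}^0_{\mathcal K}(v)$ is the single node $v$ carrying the properties of $v$, and for $\ell\le L$ the restriction of $\operatorname{Unr}^L_{\mathcal K}(v)$ to nodes of depth $\le\ell$ equals $\operatorname{Unr}^\ell_{\mathcal K}(v)$. (b) Subtree unfolding: for each relation $R\in\mathcal R$ and each $u\in\mathcal N_R(v)$, the subtree of $\operatorname{Unr}^{L+1}_{\mathcal K}(v)$ hanging below the depth-$1$ node $(v,R,u)$, with the $R$-edge attaching it to the root, is canonically isomorphic as a rooted, node-property-labeled, edge-relation-labeled tree to $\operatorname{Unr}^{L}_{\mathcal K}(u)$ with its root; this is immediate from Definition~\ref{app:def:unravelling} via the path bijection $(v,R,u,R_1,u_1,\dots)\leftrightarrow(u,R_1,u_1,\dots)$. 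From (a)–(b) one gets the key structural characterization: a root-preserving isomorphism $\operatorname{Unr}^{L+1}_{\mathcal K}(v)\cong\operatorname{Unr}^{L+1}_{\mathcal K'}(v')$ exists if and only if $v$ and $v'$ have the same properties and, for every $R\in\mathcal R$, the multiset of rooted isomorphism types $\{\{\operatorname{Unr}^L_{\mathcal K}(u):u\in\mathcal N_R(v)\}\}$ equals $\{\{\operatorname{Unr}^L_{\mathcal K'}(u'):u'\in\mathcal N_R(v')\}\}$ — an isomorphism restricts to a relation-preserving bijection of the child forest with per-child isomorphisms, and conversely per-child isomorphisms glue to a root-preserving one.

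\textbf{Induction.} The base case $L=0$ is exactly the statement that $v$ and $v'$ receive the same initial WL color iff they carry the same properties iff $\operatorname{Unr}^0_{\mathcal K}(v)\cong\operatorname{Unr}^0_{\mathcal K'}(v')$ (root to root). For the step, assume the lemma at depth $L$. By the relational WL update, $c^{L+1}(v)=c^{L+1}(v')$ iff $c^L(v)=c^L(v')$ and, for every $R\in\mathcal R$, $\{\{c^L(u):u\in\mathcal N_R(v)\}\}=\{\{c^L(u'):u'\in\mathcal N_R(v')\}\}$ (using the usual injectivity convention on the hash). The induction hypothesis states that for entities $w,w''$ in $\mathcal K$ or $\mathcal K'$ one has $c^L(w)=c^L(w'')$ iff $\operatorname{Unr}^L(w)\cong\operatorname{Unr}^L(w'')$ root to root; hence the two multiset conditions (over WL colors, and over depth-$L$ subtree iso-types) are equivalent relation by relation, and likewise $c^L(v)=c^L(v')$ matches ``$v,v'$ same properties plus matching depth-$L$ subtree forests'' at one level down. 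Combining with the structural characterization above yields $c^{L+1}(v)=c^{L+1}(v')$ iff $\operatorname{Unr}^{L+1}_{\mathcal K}(v)\cong\operatorname{Unr}^{L+1}_{\mathcal K'}(v')$ root to root, closing the induction.

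\textbf{Main obstacle.} The difficulty is bookkeeping rather than depth: one must be careful that (i) the WL hash is fed the correctly typed object (an initial color together with a multiset of relation/color pairs), so that ``same hash'' genuinely decomposes into ``same own color'' plus ``same neighbour multiset per relation''; (ii) the depth-$L$ subtree forest below the root is matched by a single bijection that is simultaneously relation-preserving and a per-child isomorphism, which is where ``tree isomorphism'' must be cleanly equated with ``equality of multisets of rooted subtree types'', including the case where one entity occurs as a child under several distinct relations and the case of repeated subtree types (handled via multisets); and (iii) the truncation compatibility (a), which is what lets the depth-$L$ hypothesis apply to the depth-$L$ pieces sitting inside the depth-$(L+1)$ tree. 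Each of these requires a precise statement to make the ``iff'' airtight in both directions, but none requires real computation.
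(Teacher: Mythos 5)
Your proposal is correct and follows essentially the same route as the paper: induction on the round/depth $L$, matching the relational WL update against the recursive structure of the unraveling tree. If anything, your version is more careful than the paper's, since you insist on a full multiset bijection between the per-relation child subtree types (the paper's ``same color $\Rightarrow$ isomorphism'' direction only exhibits a single matching neighbour pair, which is the step your multiset bookkeeping makes airtight).
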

\begin{proof}
    \textbf{Base Case:} When $L=1$, the result is obvious.

    \textbf{Induction Hypothesis:} Relational WL (RWL) test assigns the same color to $v$ and $v'$ at round $L-1$ if and only if there is an isomorphism between $\text{Unr}_{G}^{L-1}(v)$ and $\text{Unr}_{G'}^{L-1}(v')$ sending $v$ to $v'$.

    \textbf{Proof:} In the $L$-th round,

    $\bullet$ Prove ``same color $\Rightarrow$ isomorphism''.

    \begin{align*}
        c^{L}(v) =& \text{hash}(c^{L-1}(v), \big\{\big\{ (c^{L-1}(u), R_i) | u \in \mathcal{N}_{R_i}(v), i=1,\cdots,r \big\} \big\}), \\
        c^{L}(v') =& \text{hash}(c^{L-1}(v'), \big\{\big\{ (c^{L-1}(u'), R_i) | u \in \mathcal{N}_{R_i}(v'), i=1,\cdots,r \big\} \big\}).
    \end{align*}
    Because $c^L(v)=c^L(v')$, we have $c^{L-1}(v)=c^{L-1}(v')$, and there exists an entity pair $(u, u'), u \in \mathcal{N}_{R_i}(v), u' \in \mathcal{N}_{R_i}(v')$ that
    \begin{align*}
        (c^{L-1}(u), R_i) = (c^{L-1}(u'), R_i).
    \end{align*}
    Then we have $c^{L-1}(u)=c^{L-1}(u')$. According to induction hypothesis, we have $\text{Unr}_{G}^{L-1}(u) \cong  \text{Unr}_{G'}^{L-1}(u')$. Also, because the edge connecting entity pair $(v, u)$ and $(v', u')$ is $R_i$, so there is an isomorphism between $\text{Unr}_{G}^{L}(v)$ and $\text{Unr}_{G'}^{L}(v')$ sending $v$ to $v'$.

    $\bullet$ Prove ``isomorphism $\Rightarrow$ same color''.

    Because there exists an isomorphism $\pi$ between $\text{Unr}_{G}^{L}(v)$ and $\text{Unr}_{G'}^{L}(v')$ sending $v$ to $v'$, assume $\pi$ is an bijective between the neighbors of $v$ and $v'$, e.g, $ u \in \mathcal{N}_{R_i}(v), u' \in \mathcal{N}_{R_i}(v') $ and $u_i'=\pi(u_i)$, the relation between entity pair $(u, v)$ and $(u', v')$ is $R_i$. 
    
Next we prove $c^{L-1}(u)=c^{L-1}(u')$.
Because $\text{Unr}_{G}^{L}(v)$ and $\text{Unr}_{G}^{L}(v')$ are isomorphism, and $\pi$ maps $u\in \mathcal{N}_{R_i}(v)$ to $u' \in \mathcal{N}_{R_i}(v')$, for the left tree with $L-1$ depth, i.e., $\text{Unr}_{G}^{L-1}(u)$ and $\text{Unr}_{G'}^{L-1}(u')$, $\pi$ can be the isomorphism mapping between $\text{Unr}_{G}^{L-1}(u)$ and $\text{Unr}_{G'}^{L-1}(u')$. According to induction hypothesis, we have $c^{L-1}(u)=c^{L-1}(u')$. Because $\text{Unr}_{G}^{L}(v) \cong \text{Unr}_{G'}^{L}(v')$, we also have $\text{Unr}_{G}^{L-1}(u) \cong \text{Unr}_{G'}^{L-1}(u')$ which means $c^{L-1}(u)=c^{L-1}(u')$. After running RWL test, we have $c^L(v)=c^L(v')$.
\end{proof}

\begin{theorem}\label{app:theorem:otto}
    Let $\varphi(x)$ be a unary formula in the formal description of graph $G$ in Section~\ref{ssec:formal-description-of-rule-structures}. If $\varphi(x)$ is not equivalent to a formula in CML, there exist two KGs $G$ and $G'$ and two entities $v$ in $G$ and $v'$ in $G'$ such that $\text{Unr}_{G}^L(v) \cong \text{Unr}_{G'}^L(v')$ for every $L\in\mathbb{N}$ and such that $G, v \models \varphi$ but $G', v' \nvDash \varphi$.
\end{theorem}
\begin{proof}
    The theorem follows directly from Theorem 2.2 in \citet{otto2019graded}. 
    Because $G, v \sim_{\#} G', v'$ and $\text{Unr}_{G}^L(v) \cong \text{Unr}_{G'}^L(v')$ are equivalent with the definition of counting bisimulation (i.e., notation $\sim_{\#}$).
\end{proof}

\begin{lemma}\label{app:lemma:forward-contrary-proposition}
    If a formula $\varphi(x)$ is not equivalent to any formula in CML,
    there is no MPNN~(\ref{eq:GNN}) that can learn $\varphi(x)$.
\end{lemma}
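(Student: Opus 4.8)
The plan is to derive the claim as a contradiction from the combination of Theorem~\ref{app:theorem:otto}, Lemma~\ref{app:lemma:WL-unr}, and the well-known fact that message-passing GNNs are no more expressive than the Weisfeiler--Lehman (WL) test. Suppose, for contradiction, that $\varphi(x)$ is not equivalent to any formula in CML and yet some GNN of the form~\eqref{eq:GNN} with $L$ layers captures $\varphi(x)$. By the notion of ``captured'' used in the proof of Lemma~\ref{app:lemma:backward}, this means there is a fixed readout (e.g.\ a linear classifier) that recovers from $\mathbf{e}_v^L$ whether $\mathcal{K}, v \models \varphi$, for every knowledge graph $\mathcal{K}$ and entity $v$.

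Since $\varphi(x)$ is not equivalent to a CML formula, Theorem~\ref{app:theorem:otto} supplies two knowledge graphs $\mathcal{K}, \mathcal{K}'$ and entities $v \in \mathcal{K}$, $v' \in \mathcal{K}'$ with $\text{Unr}_{\mathcal{K}}^L(v) \cong \text{Unr}_{\mathcal{K}'}^L(v')$ (taking the unravelling depth equal to the number of GNN layers) while $\mathcal{K}, v \models \varphi$ but $\mathcal{K}', v' \nvDash \varphi$. By Lemma~\ref{app:lemma:WL-unr}, this tree isomorphism is equivalent to the WL test assigning $v$ and $v'$ the same color after $L$ rounds; because each node of the unravelling tree records the unary predicates (including any constant predicates) of the corresponding entity, the initial colors of the WL test and the initial features $\mathbf{e}^0$ of the GNN are matched by the isomorphism as well. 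The next step is to invoke the standard WL upper bound of \citet{xu2018powerful}, in its natural relational form that tracks relation types in the multiset aggregation exactly as in Lemma~\ref{app:lemma:WL-unr}, to conclude $\mathbf{e}_v^L = \mathbf{e}_{v'}^L$. Applying the fixed readout then yields the same answer at $v$ and $v'$, contradicting $\mathcal{K}, v \models \varphi$ and $\mathcal{K}', v' \nvDash \varphi$; hence no GNN~\eqref{eq:GNN} captures $\varphi(x)$.

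I expect the main obstacle to be bookkeeping rather than any deep step: one must pin down ``captured'' as ``determined by the depth-$L$ representation through a fixed readout'', observe that a fixed finite depth makes depth-$L$ unravelling sufficient, and verify that the heterogeneous WL refinement of Lemma~\ref{app:lemma:WL-unr} is precisely the refinement that~\eqref{eq:GNN} cannot improve upon. It is also worth one sentence to note that the negative result must apply to the general choice of $\delta, \phi, \psi$ in~\eqref{eq:GNN}, not merely the simple instantiation~\eqref{app:eq:simple-gnn}; this is automatic, since the WL upper bound holds for every instantiation of the message, aggregation, and combination functions. Combining this lemma with Lemma~\ref{app:lemma:backward} then establishes both directions of Theorem~\ref{theorem:main}.
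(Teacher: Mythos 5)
Your proposal is correct and follows essentially the same route as the paper's proof: contradiction via Theorem~\ref{app:theorem:otto} to obtain $\mathcal{K},v$ and $\mathcal{K}',v'$ with isomorphic unravellings but differing on $\varphi$, then Lemma~\ref{app:lemma:WL-unr} plus the WL upper bound on message passing to force $\mathbf{e}_v^L=\mathbf{e}_{v'}^L$. Your version is in fact slightly more careful than the paper's, which jumps directly from the unravelling isomorphism to equality of GNN representations without explicitly invoking the relational WL bound on~\eqref{eq:GNN} or noting that the negative result must hold for arbitrary $\delta,\phi,\psi$.
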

\begin{proof}
    Assume for a contradiction that there exists a MPNN that can learn $\varphi(x)$.
    Since $\varphi(x)$ is not equivalent to any formula in CML, with Theorem~\ref{app:theorem:otto}, there exists two KGs $G$ and $G'$ and two entities $v$ in $G$ and $v'$ in $G'$ such that $\text{Unr}_{G}^L(v) \cong \text{Unr}_{G'}^L(v')$ for every $L\in\mathbb{N}$ and such that $G, v \models \varphi$ and $G', v' \nvDash \varphi$. 
    By Lemma~\ref{app:lemma:WL-unr}, because $\text{Unr}_{G}^L(v) \cong \text{Unr}_{G'}^L(v')$ for every $L\in\mathbb{N}$, we have $\mathbf{e}_{v}^{(L)}=\mathbf{e}_{v'}^{(L)}$. But this contradicts the assumption that MPNN is supposed to learn $\varphi(x)$.
\end{proof}

\begin{proof}[Proof of Theorem~\ref{app:theorem:forward}]
	Theorem can be obtained directly from Lemma~\ref{app:lemma:forward-contrary-proposition}.
\end{proof}

\begin{proof}[Proof of Theorem~\ref{theorem:main}]
	Theorem can be obtained directly by combining Lemma~\ref{app:lemma:backward} and Theorem~\ref{app:theorem:forward}.
\end{proof}

The following two remarks intuitively explain why MPNN can learn formulas in CML.
\begin{remark}
	Theorem~\ref{theorem:main} applies to both $\text{CML}[G]$ and $\text{CML}[G,\mathsf{c}_1, \mathsf{c}_2, \cdots,\mathsf{c}_k]$.
	The atomic unary predicate $P_i(x)$ in CML of graph $G$ is learned by the initial representations $\mathbf{e}_v^{(0)}, v\in\mathcal{V}$, which can be achieved by assigning special vectors to $\mathbf{e}_v^{(0)}, v\in\mathcal{V}$.
	In particular, the constant predicate $P_c(x)$ in $\text{CML}[G,\mathsf{c}]$ is learned by assigning a unique vector (e.g., one-hot vector for different entities)
	as the initial representation of the entity with unique identifier $\mathsf{c}$.
	The other sub-formulas $\neg\varphi(x), \varphi_1(x) \wedge \varphi_2(x)$ in Definition~\ref{app:def:CML} can be learned by continuous logical operations~\citep{arakelyan2020complex} which are independent of message-passing mechanisms. 
\end{remark}

\begin{remark}
	Assume the $(i-1)$-th layer representations $\mathbf{e}_v^{(i-1)},v\in\mathcal{V}$ can learn the formula $\varphi(x)$ in CML, the $i$-th layer representations $\mathbf{e}_v^{(i)}, v\in\mathcal{V}$ of MPNN can learn $\exists^{\geq N} y, R_j(y,x) \wedge \varphi(y)$ with specific aggregation function in (\ref{eq:GNN}) because $\mathbf{e}_v^{(i)}, v\in\mathcal{V}$ can aggregate the logical formulas in the one-hop neighbor representation $\mathbf{e}_v^{(i-1)}, v\in\mathcal{V}$ (i.e., $\varphi(x)$) with message-passing mechanisms.
\end{remark}

The following remark clarifies the scope of Theorem~\ref{theorem:main} and \ref{theorem:QL}.
\begin{remark}
	The positive results for our theorem (e.g., a MPNN variant can learn a logical formula) hold for MPNNs powerful than the MPNN we construct in (\ref{app:eq:simple-gnn}), while our negative results (e.g., a MPNN variant cannot learn a logical formula) hold for any general MPNNs~(\ref{eq:GNN}). 
	Hence, the backward direction remains valid irrespective of the aggregate and combine operators under consideration. This limitation is inherent to the MPNN architecture represented by (\ref{eq:GNN}) and not specific to the chosen representation update functions. On the other hand, the forward direction holds for MPNNs that are more powerful than (\ref{app:eq:simple-gnn}).
\end{remark}

\subsection{Proof of Theorem~\ref{theorem:QL}}
\begin{definition}
   QL-GNN learns a rule formula $R(\mathsf{h}, x)$ if and only if given any graph $G$, the QL-GNN's score of a new triplet $(h,R,t)$ can be mapped to a binary value, where \texttt{True} indicates that $R(\mathsf{h}, x)$ satisfies on entity $t$, while \texttt{False} does not satisfy.
\end{definition}

\begin{proof}
	We set the KG as $G$ and restrict the unary formulas in $\text{CML}[G,\mathsf{h}]$ to the form of $R(\mathsf{h}, x)$.
	This theorem is directly obtained by Theorem~\ref{theorem:main} because constant $h$ can be equivalently transformed to constant predicate $P_h(x)$.
\end{proof}

\begin{proof}[Proof of Corollary~\ref{coro:QL}]
\textbf{Base case:}
Since the unary predicate can be encoded into the initial representation of the entity according to Section~\ref{app:what-kind-of-logic-can-MPNN-lean}. Then the base case is obvious.

\textbf{Recursion rule:}
Since the rule structures $R(\mathsf{h}, x), R_1(\mathsf{h}, x), R_2(\mathsf{h}, x)$ are unary predicate and can be learned by QL-GNN,
they are formulas in $\text{CML}[G, \mathsf{h}]$.
According to recursive definition of CML,
$R_1(\mathsf{h}, x) \wedge R_2(\mathsf{h}, y)$, $\exists^{\geq N}y \left( R_i(y, x) \wedge R(\mathsf{h}, y) \right)$ are also formulas in $\text{CML}[G, \mathsf{h}]$, therefore can be learned by QL-GNN.

\end{proof}

\subsection{Proof of Theorem~\ref{theorem:EA}}

\begin{definition}
   CompGCN learns a rule formula $R(x,y)$ if and only if given any graph $G$, the QL-GNN's score of a new triplet $(h,R,t)$ can be mapped to a binary value, where \texttt{True} indicates that $R(x,y)$ satisfies on entity pair $(h,t)$, while \texttt{False} does not satisfy.
\end{definition}

\begin{proof}
	According to Theorem~\ref{theorem:main}, the MPNN representation $\mathbf{e}_v^{(L)}$ can represent the formulas in $\text{CML}[G]$.
	Assume $\varphi_1(x)$ and $\varphi_2(y)$ can be represented by the MPNN representation $\mathbf{e}_v^{(L)},v\in\mathcal{V}$ and there exists two functions $g_1$ and $g_2$ that can extract the logical formulas from $\mathbf{e}_v^{(L)}$, i.e., $g_i(\mathbf{e}_v^{(L)})=1$ if $G, v \models \varphi_i$ and $g_i(\mathbf{e}_v^{(L)})=0$ if $G, v \nvDash \varphi_i$ for $i=1,2$.
	We show how the following two logical operators can be learned by $s(h,R,t)$ for candidate triplet $(h,R,t)$: 
	\begin{itemize}
		\item Conjunction: $\varphi_1(x) \wedge \varphi_2(y)$. The conjunction of $\varphi_1(x), \varphi_2(y)$ can be learned with function $s(h,R,t)=g_1(\mathbf{e}_h^{(L)})\cdot g_2(\mathbf{e}_t^{(L)})$.
		
		\item Negation: $\neg \varphi_1(x)$. The negation of $\varphi_1(x)$ can be learned with function $s(h,R,t)=1-g_1(\mathbf{e}_h^{(L)})$.
	\end{itemize}
	The disjunction $\vee$ can be obtained by $\neg (\neg \varphi_1(x) \wedge \neg \varphi_2(y))$.
	More complex formula involving sub-formulas from $\{\varphi(x)\}$ and $\{\varphi^\prime(y)\}$ can be learned by combining the score functions above.	
\end{proof}

\subsection{Proof of Proposition~\ref{proposition:label}}

\begin{lemma}\label{app:lemma:label}
	Assume $\varphi(x)$ describes a single-connected rule structure $\mathsf{G}$ in a KG.
	If assign constant to entities with out-degree large than 1 in the KG, the structure $\mathsf{G}$ can be described with formula $\varphi'(x)$ in CML of KG with assigned constants.
\end{lemma}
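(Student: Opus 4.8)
The plan is to exhibit the CML formula $\varphi'(x)$ explicitly by ``unfolding'' the rule structure $\mathsf{G}$ backwards from its tail node, and then to argue that the constants placed on the high-out-degree entities are precisely what make this unfolding faithful. First I would fix the combinatorial setup: a single-connected rule structure $\mathsf{G}$ is a connected graph with a distinguished head node $s$ (interpreted by the constant $\mathsf{h}$) and tail node $t$ (interpreted by the free variable $x$), and $\varphi(x)$ is equivalent to the conjunctive statement ``there is a homomorphism of $\mathsf{G}$ into $\mathcal{K}$ sending $s\mapsto h$ and $t$ to the point evaluated at $x$'', i.e. $\exists\bar z\,\bigwedge_{(a,R_j,b)\in\mathsf{G}}R_j(a,b)$ together with $P_h$ at $s$. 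I would also record that $\mathsf{G}$ is acyclic (rule structures carry no directed cycle), so it admits a topological order $p_1,\dots,p_m=t$.

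Next I would define $\varphi'$ by recursion along that topological order. For each node $p$ set
\[
\theta_p(x):=\chi_p(x)\ \wedge\!\!\bigwedge_{(u,R_j,p)\in\mathsf{G}}\!\!\exists^{\ge n_{u,R_j,p}}y\,\bigl(R_j(y,p)\wedge\theta_u(y)\bigr),
\]
where $\chi_p$ is $P_h(x)$ if $p=s$, is $P_{c_p}(x)$ if the image of $p$ carries a constant, is their conjunction if both hold, and is $\top$ otherwise, and $n_{u,R_j,p}$ is the multiplicity of that edge; then put $\varphi'(x):=\theta_t(x)$. Each $\theta_p$ is visibly a formula of CML (it only uses $\top$, unary predicates, $\wedge$, and the graded backward modality), hence so is $\varphi'$. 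I would then check that labeling \emph{all} entities of $\mathcal{K}$ of out-degree $>1$ is enough: any node of $\mathsf{G}$ with two distinct outgoing edges is mapped by an embedding to an entity of out-degree $>1$, so it carries a constant; and since $\mathsf{G}$ is acyclic, every undirected cycle of $\mathsf{G}$ contains a node of out-degree $\ge 2$, so labeling those cuts every cycle. (Dead-end branches pointing away from the main chain are absorbed by the $\exists^{\ge N}(\dots\wedge\top)$ clauses exactly as in the $I(\mathsf{h},x)$ example, and edges that must be read against their direction are handled by including inverse relations in $\mathcal{K}$, as NBFNet and RED-GNN already do.)

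Finally I would prove $\mathcal{K},v\models\varphi(x)\iff\mathcal{K},v\models\varphi'(x)$ for every entity $v$, with the added constants on the right-hand side. The direction $(\Rightarrow)$ is routine: a homomorphism $g$ with $g(t)=v$, $g(s)=h$ reads off witnesses for every existential quantifier of $\varphi'$ along each branch. The direction $(\Leftarrow)$ is the heart of the matter and the step I expect to be the main obstacle: a satisfying assignment for $\varphi'$ supplies a witnessing entity for each occurrence of each node $u$ in the unfolding, but two occurrences of the \emph{same} $u$ (which appear exactly when $u$ has out-degree $>1$) need not a priori receive the same entity. This is where the constants do the work: every such $u$ has $P_{c_u}$ built into $\theta_u$, and $P_{c_u}$ holds at exactly the one entity named by $\mathsf{c}_u$, so all occurrences of $u$ are forced to coincide, and likewise all occurrences of $s$ are pinned to $h$ by $P_h$. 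Glueing the per-branch maps along these forced identifications yields a genuine homomorphism of $\mathsf{G}$ into $\mathcal{K}$ with $t\mapsto v$ and $s\mapsto h$, i.e. $\mathcal{K},v\models\varphi$. Hence $\varphi'$ is a CML formula equivalent to $\varphi$, which proves the lemma; Theorem~\ref{theorem:label} then follows by taking $\varphi(x)=R(\mathsf{h},x)$ and $\varphi'(x)=R'(\mathsf{h},x)$.
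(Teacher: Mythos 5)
Your proof is essentially correct but takes a genuinely different route from the paper. The paper argues indirectly: it invokes Theorem~\ref{app:theorem:otto} (Otto's counting-bisimulation characterization) to say that if $\varphi'$ were not in CML there would be two structures with isomorphic unravelling trees $\text{Unr}_{\mathsf{G}}^L(v)\cong\text{Unr}_{\mathsf{G}'}^L(v')$ for all $L$ disagreeing on $\varphi'$, and then observes that once every out-degree-$>1$ entity carries a constant, the rule structure can be uniquely recovered from its unravelling tree, so the two structures would be isomorphic --- a contradiction. You instead construct $\varphi'$ explicitly by backward unfolding along a topological order and prove the equivalence $\varphi\Leftrightarrow\varphi'$ directly, with the constants doing the work of forcing the repeated occurrences of a shared node to receive the same witness so that the per-branch maps glue into a homomorphism. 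The two arguments rest on the same insight (constants disambiguate repeated nodes in the tree unfolding), but yours is constructive --- it actually produces the formula, generalizing the paper's worked example $U'(\mathsf{h},x)$ in Corollary~\ref{app:corollary:labeling} --- whereas the paper's is shorter and non-constructive, and quietly buries the key combinatorial claim in the sentence ``can be uniquely recovered from its unravelling tree.'' Three caveats on your version, all fixable: (i) you assume $\mathsf{G}$ is acyclic to get a topological order, which the paper never states (though all its examples are DAGs, and the paper's own recovery claim would also fail on an unlabeled directed cycle); (ii) your recursion only traverses edges incoming to ancestors of the tail, so edges of $\mathsf{G}$ pointing away from that ancestor set require inverse relations, which are not part of the CML modality $\exists^{\geq N}y\,(R_j(y,x)\wedge\varphi(y))$ as defined in Eq.~\eqref{eq:CML} unless the inverse triplets are materialized in $\mathcal{K}$; and (iii) the counting thresholds should be attached to groups of parallel in-edges whose sources satisfy the same $\theta_u$ (as in the $\exists^{\geq 2}$ clause of $I(\mathsf{h},x)$), not to individual edges.
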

\begin{proof}
	According to Theorem~\ref{app:theorem:otto}, assume $\varphi'(x)$ with assigned constants is not equivalent to a formula in CML, there should exist two rule structures $\mathsf{G}, \mathsf{G}'$ in KG $G, G'$, and entity $v$ in $\mathsf{G}$ and entity $v'$ in $\mathsf{G}'$ such that $\text{Unr}_{\mathsf{G}}^L(v) \cong \text{Unr}_{\mathsf{G}'}^L(v')$ for every $L\in\mathbb{N}$ and such that $\mathsf{G}, v \models \varphi'$ but $\mathsf{G}', v' \nvDash \varphi'$.

	Since each entity in $\mathsf{G}$ ($\mathsf{G}'$) with out-degree larger than 1 is assigned with a constant, the rule structure $\mathsf{G}$ ($\mathsf{G}'$) can be uniquely recovered from its unravelling tree $\text{Unr}_{\mathsf{G}}^L(v)$ ($\text{Unr}_{\mathsf{G}'}^L(v)$) for sufficient large $L$.
	Therefore, if $\text{Unr}_{\mathsf{G}}^L(v) \cong \text{Unr}_{\mathsf{G}'}^L(v')$ for every $L\in\mathbb{N}$, the corresponding rule structures $\mathsf{G}$ and $\mathsf{G}'$ should be isomorphism too, which means $\mathsf{G}, v \models \varphi'$ and $\mathsf{G}', v' \models \varphi'$.
	Thus, $\varphi'(x)$ must be a formula in CML.
\end{proof}

\begin{proof}[Proof of Proposition~\ref{proposition:label}]
	The theorem holds by restricting the unary formula to the form of $R(\mathsf{h}, x)$ on Lemma~\ref{app:lemma:label}. 
\end{proof}

\begin{proof}[Proof of Corollary~\ref{coro:label}]
   By converting new constants $\mathsf{c}_1,\mathsf{c}_2,\cdots,\mathsf{c}_k$ to constant predicates $P_{c_1}(x),P_{c_2}(x),\cdots,P_{c_k}(x)$, the corollary holds by using Theorem~\ref{theorem:QL}.
\end{proof}

\section{Experiments}

\subsection{More rule structures in synthetic datasets}\label{app:sec:more-rules}

In Section~\ref{ssec:synthetic-exp}, we also include the following rule structures in the synthetic datasets, i.e., $C_4$ and $I_2$ in Figure~\ref{app:fig:more-rules}, for experiments.
$C_4$ and $I_2$ are both formulas from $\text{CML}[G, \mathsf{h}]$.
The proof of $C_4$ is similar to the proof of $C_3$ in Corollary~\ref{app:corollary:chain}.
The proof of $I_2$ is similar to that of $I_1$ and is in Corollary~\ref{app:corollary:inductive_2}.

\begin{figure*}[h]
	\centering
	\includegraphics[width=0.8\textwidth]{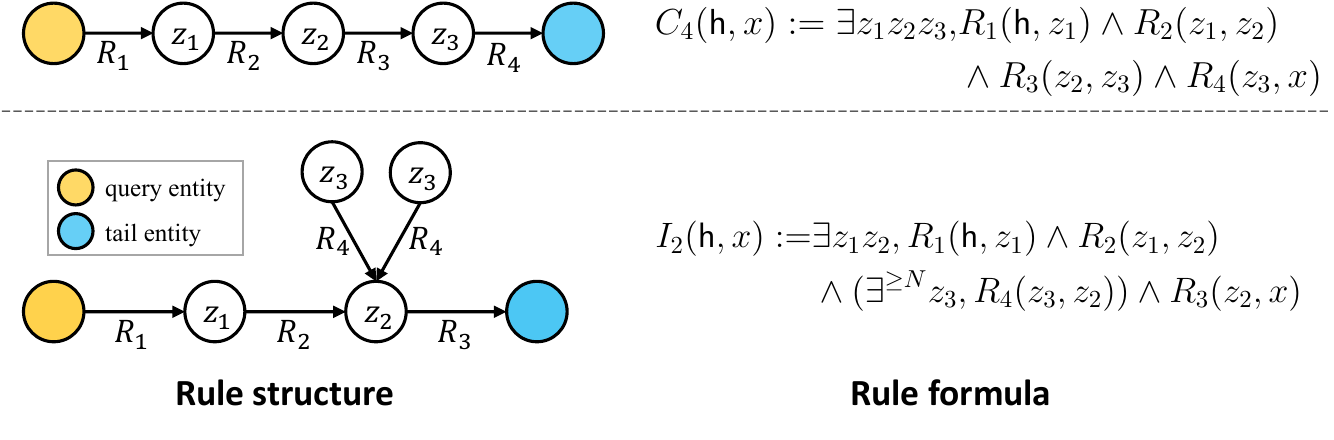}
	\vspace{-15px}
	\caption{In the synthetic experiments, we also compare the performance of various GNNs on the synthetic datasets generated from $C_4$ and $I_2$.
	}
	\label{app:fig:more-rules}
\end{figure*}

\begin{corollary}\label{app:corollary:inductive_2}
	$I_2(\mathsf{h}, x)$ is a formula in $\text{CML}[G, \mathsf{h}]$.
\end{corollary}
\begin{proof}
	$I_2(\mathsf{h}, x)$ is a formula in $\text{CML}[G, \mathsf{h}]$ as it can be recursively defined as follows
	\begin{align*}
		\varphi_1(x) &= P_h (x), \\
		\varphi_2 (x) &= \exists y, R_1 (y, x) \wedge \varphi_1(y), \\
		\varphi_3 (x) &= \exists y, R_2 (y, x) \wedge \varphi_2(y), \\
		\varphi_s (x) &= \exists^{\geq 2}y, R_4(y, x) \wedge \top, \\
		\varphi_4 (x) &= \varphi_s(x) \wedge \varphi_3(x), \\
		I_2(\mathsf{h}, x) &= \exists y, R_3 (y, x) \wedge \varphi_4(y).
	\end{align*}
\end{proof}

\subsection{Experiments for CompGCN}\label{app:exp:EA-GNN}

The classical framework of KG reasoning is inadequate for assessing the expressivity of CompGCN because the query $(h,R,?)$ assumes that certain logical formula ${ \varphi(x) }$ are satisfied at the head entity $h$ by default. In order to validate the expressivity of CompGCN, it is necessary to predict all missing triplets directly based on entity representations without relying on the query $(h,R,?)$. To accomplish this, we create a new dataset called $S$ that adheres to the rule formula $S(x,y)=\varphi^\star(x) \wedge \varphi^\star(y)$, where the logical formula is defined as:

\begin{equation*}
\varphi^\star(x) = \exists y R_1(x,y) \wedge \left(\exists x R_2(y,x) \wedge (\exists y R_3(x,y))\right).
\end{equation*}

Here, $\varphi^\star(x)$ is represented with parameter reusing (reusing $x$ and $y$) and is a formula in CML. Therefore, the formula $S(x,y)$ takes the form of $R(x,y)=f_R(\{\varphi(x)\}, \{ \varphi^\prime (y) \})$ and can be learned by CompGCN, as indicated by Theorem~\ref{theorem:EA}. To validate our theorem, we generate a synthetic dataset $S$ using the same steps outlined in Section~\ref{ssec:synthetic-exp}, following the rule $S(x,y)$. We then train CompGCN on dataset $S$. The experimental results demonstrate that CompGCN effectively learns the rule formula $S(x,y)$ with 100\% accuracy. Comparing it with QL-GNN is unnecessary since the latter is specifically designed for KG reasoning setting involving the query $(h,R,?)$.

\subsection{Statistics of synthetic datasets}
\begin{table}[H]
	\centering
	\caption{Statistics of the synthetic datasets.}
   \small
	\begin{tabular}{c|ccccccc}
	\toprule
	Datasets    & $C_3$ & $C_4$ & $I_1$ & $I_2$ & $T$  & $U$  & $S$ \\ \midrule
	known triplets & 1514  & 2013  & 843   & 1546  & 2242 & 2840 & 320 \\ \midrule
	training    & 1358  & 2265  & 304   & 674   & 83   & 396  & 583 \\ \midrule
	validation  & 86    & 143   & 20    & 43    & 6    & 26   & 37  \\ \midrule
	testing     & 254   & 424   & 57    & 126   & 15   & 183  & 109 \\ \bottomrule
	\end{tabular}
\end{table}

{
\subsection{Results on synthetic with missing triplets}\label{app:exp:missing}

We randomly remove 5\%, 10\%, and 20\% edges from synthetic datasets to test the robustness of QL-GNN and EL-GNN for rule structures learning. The results of QL-GNN and EL-GNN are shown in Table~\ref{tab:missing-QL} and \ref{tab:missing-EL} respectively. The results show that the completeness of rule structure correlates strongly with the performance of QL-GNN and EL-GNN.
}
\begin{table}[h]
	\centering
	\caption{The accuracy of QL-GNN on synthetic datasets with missing triplets.}
	\small
	\label{tab:missing-QL}
	\begin{tabular}{c|cccccc}
		\toprule
		Triplet missing ratio	 & $C_3$  & $C_4$  & $I_1$  & $I_2$  & $T$     & $U$     \\ \midrule
	5\%  & 0.899 & 0.866 & 0.760 & 0.783 & 0.556 & 0.329 \\
	10\% & 0.837 & 0.718 & 0.667 & 0.685 & 0.133 & 0.279 \\
	20\% & 0.523 & 0.465 & 0.532 & 0.468 & 0.111 & 0.162 \\ \bottomrule
	\end{tabular}
\end{table}

\begin{table}[h]
	\centering
	\caption{The accuracy of EL-GNN on synthetic datasets with missing triplets.}
	\small
	\label{tab:missing-EL}
	\begin{tabular}{c|cccccc}
		\toprule
		Triplet missing ratio & $C_3$  & $C_4$  & $I_1$  & $I_2$  & $T$     & $U$     \\ \midrule
	5\%  & 0.878 & 0.807 & 0.842 & 0.857 & 0.244 & 0.5   \\
	10\% & 0.766 & 0.674 & 0.725 & 0.661 & 0.222 & 0.347 \\
	20\% & 0.499 & 0.405 & 0.637 & 0.458 & 0.111 & 0.257 \\ \bottomrule
	\end{tabular}
\end{table}

\subsection{More experimental details on real datasets}\label{app:exp:real}
\paragraph{MRR and Hit@10}
Here we supplement MRR and Hit@10 of NBFNet and EL-NBFNet on real datasets in Table~\ref{tab:real-mrr}.
The improvement of EL-NBFNet on MRR and Hit@10 is not as significant as that on Accuracy because the EL-NBFNet is designed for exactly learning rule formulas and only Accuracy can be guaranteed to be improved. 
\begin{table}[h]
   \caption{MRR and Hit@10 of NBFNet and EL-NBFNet on real datasets.}
   \centering
   \small
   \setlength\tabcolsep{4pt}
   \label{tab:real-mrr}
   \begin{tabular}{c|cccccccccc}
   \toprule
    &
     \multicolumn{2}{c}{Family} &
     \multicolumn{2}{c}{Kinship} &
     \multicolumn{2}{c}{UMLS} &
     \multicolumn{2}{c}{WN18RR} &
	 \multicolumn{2}{c}{FB15k-237} \\ \midrule
             & MRR   & Hit@10 & MRR   & Hit@10 & MRR   & Hit@10 & MRR   & Hit@10 & MRR & Hit@10 \\ \midrule
   NBFNet    & 0.983 & 0.993  & 0.900 & 0.997  & 0.970 & 0.997  & 0.548 & 0.657 & 0.415 & 0.599 \\
   EL-NBFNet & 0.990 & 0.991  & 0.905 & 0.996  & 0.975 & 0.993  & 0.562 & 0.669 & 0.424 & 0.607 \\ \bottomrule
   \end{tabular}
\end{table}

\paragraph{Different hyperparameters of $d$}
We have observed that a larger or smaller $d$ does not necessarily lead to better performance in Figure~\ref{fig:out-degree}.
For real datasets, we also observed similar phenomenon in Table~\ref{tab:real-d}.
For real datasets, we uses $d=5, 30, 100, 100, 300$ for Family, Kinship, UMLS, WN18RR, and FB15k-237, respectively.
\begin{table}[h]
   \caption{The accuracy of EL-NBFNet on UMLS with different $d$.}
   \centering
   \label{tab:real-d}
   \begin{tabular}{ccccc}
   \toprule
   $d=0$ & $d=50$ & $d=100$ & $d=150$ & NBFNet \\ \midrule
   0.948 & 0.958  & 0.963   & 0.961   & 0.951  \\ \bottomrule
   \end{tabular}
\end{table}

{
\paragraph{Time cost of EL-NBFNet} In Table~\ref{tab:time-cost}, we show the time cost of EL-NBFNet and NBFNet on real datasets. The time cost is measured by seconds of testing phase. The results show that EL-NBFNet is slightly slower than NBFNet. The reason is that EL-NBFNet needs to traverse all entities on KG to assign constants to entities with out-degree larger than degree threshold $d$.
\begin{table}[H]
    \centering
	\caption{Time cost (seconds of testing) of EL-NBFNet on real datasets.}
	\label{tab:time-cost}
    \begin{tabular}{cccccc}
    \toprule
        Methods & Family & Kinship & UMLS & WN18RR & FB15k-237 \\  \midrule
        EL-NBFNet & 270.3 & 14.0 & 6.7 & 35.6 & 20.1 \\ 
        NBFNet & 269.6 & 13.5 & 6.4 & 34.3 & 19.8 \\ \bottomrule
    \end{tabular}
\end{table}

}
\section{Theory of GNNs for single-relational link prediction}\label{app:sec:single}

Our theory of KG reasoning can be easily extended to the single-relational link prediction. The following two corollaries are the extensions of Theorem~\ref{theorem:QL} and Theorem~\ref{theorem:EA} to the single-relational link prediction, respectively.

\begin{corollary}[Theorem~\ref{theorem:QL} on single-relational link prediction]\label{app:corollary:QL-single}
	For single-relational link prediction, given a query $(h, R, ?)$, 
	a rule formula $R(\mathsf{h}, x)$ is learned by QL-GNN if and only if $R(\mathsf{h}, x)$ is a formula in $\text{CML}[G,\mathsf{h}]$.
\end{corollary}

\begin{corollary}[Theorem~\ref{theorem:EA} on single-relational link prediction]\label{app:corollary:EA-single}
   For single-relational link prediction, 
   CompGCN can learn the rule formula 
   $R(x, y)=f_R\left(\{\varphi(x)\}, \{\varphi^\prime(y)\}\right)$
   where $f_R$ is a logical formula involving sub-formulas from $\{\varphi(x)\}$ and $\{\varphi^\prime(y)\}$ which are the sets of formulas in $\text{CML}[G]$ that can be learned by GNN~(\ref{eq:GNN}).
\end{corollary}   

Corollary \ref{app:corollary:QL-single} and \ref{app:corollary:EA-single} can be directly proven by restricting the logic of KG to single-relational graph, which means there is only one binary predicate in logic of graph.

{

\section{Understanding generalization based on expressivity}\label{app:sec:generalization}

\subsection{Understanding expressivity vs. generalization}\label{app:sec:QL-generalization}
In this section, we provide some insights on the relation between expressivity and generalization.
Expressivity in deep learning pertains to a model's capacity to accurately represent information, whereas the ability of a model to achieve this level of expressivity depends on its generalization.
Considering generalization requires not only contemplating the model design but also assessing whether the training algorithm can enable the model to achieve its expressivity.
The experiments in this paper can also show this relation about expressivity and generalization from two perspective: (1) The experimental results of QL-GNN shows that its expressivity can be achieved with classical deep learning training strategies; (2) In the development of deep learning, a consensus is that more expressivity often leads to better generalization. The experimental results of EL-GNN verify this consensus.

In addition, our theory can provide some insights on model design with better generalization.
Based on the constructive proof of Lemma~\ref{app:lemma:backward}, if QL-GNN can learn a rule formula $R(\mathsf{h}, x)$ with $L$ recursive definition, QL-GNN can learn $R(\mathsf{h}, x)$ with layers and hidden dimensions no less than $L$. 
Assuming learning $r$ relations with QL-GNN and numbers of recursive definition for these relations are $L_1, L_2, \cdots, L_r$ respectively, QL-GNN can learn these relations with layers no more than $max_i L_i$ and hidden dimensions no more than $\sum L_i$.
Since these bounds are nearly worst-case scenarios, both the dimensions and layers can be further optimized. 
Also, in the constructive proof of Lemma~\ref{app:lemma:backward}, the aggregation function is summation, and it is difficult for mean and max/min aggregation function to capture sub-formula $\exists^{\geq N}y \left( R_i(y, x) \wedge R(\mathsf{h}, y) \right)$.
From the perspective of rule learning, QL-GNN extracts structural information at each layer. Therefore, to learn rule structures, QL-GNN needs an activation function with compression capability for information extraction from inputs. Empirically, QL-GNN with identify activation function fails to learn with rules in synthetic dataset. 

Moreover, because our theory cannot help understand generalization related to network training, the dependence to hyperparameters of network training, e.g., the number of training examples, graph size, number of entities, cannot be revealed from our theory.

\subsection{Why assigning lots of constants hurts generalization?}\label{app:sec:many-features}

}
We take the relation $C_3$ as an example to show why assigning lots of constants hurts generalization from logical perspective.
We add two different constants $\mathsf{c}_1$ and $\mathsf{c}_2$ to the rule formula $C_3(h,x)$, which results two different rule formulas $C_3^\prime(\mathsf{h},y)=\exists z_1 R_1(\mathsf{h},z_1)\wedge R_2(z_1,\mathsf{c}_1)\wedge R_3(\mathsf{c}_1,x)$ and $C_3^\star(\mathsf{h},y)=\exists z_1 R_1(\mathsf{h},z_1)\wedge R_2(z_1,\mathsf{c}_2)\wedge R_3(\mathsf{c}_2,x)$.
Predicting new triplets for relation $C_3$ can now be achieved by learning the rule formulas $C_3(\mathsf{h},x), C_3^\prime(\mathsf{h},x)$, or $C_3^\star(\mathsf{h},x)$. Among these rule formulas, $C_3(\mathsf{h},x)$ is the rule with the best generalization, while $C_3^\prime(\mathsf{h},x)$ and $C_3^\star(\mathsf{h},x)$ require the rule structure to pass through the entities with identifiers of constants $\mathsf{c}_1$ and $\mathsf{c}_2$, respectively. Thus, when adding constants, maintaining performance requires the network to learn both rule formulas $C_3^\prime(\mathsf{h},x), C_3^\star(\mathsf{h},x)$ simultaneously which may potentially require a network with larger capacity. Even EL-GNN is unnecessary to learn $C_3^\prime(\mathsf{h},x), C_3^\star(\mathsf{h},x)$ since $C_3(\mathsf{h}, x)$ is learnable, EL-GNN cannot avoid learning rules with more than one constant in it when the rules are out of CML.

\section{Limitations and Impacts}\label{app:sec:limitation}
Our work offers a fresh perspective on understanding GNN's expressivity in KG reasoning.
Unlike most existing studies that focus on distinguishing ability, we analyze GNN's expressivity based solely on its ability to learn rule structures.
Our work has the potential to inspire further studies. For instance, our theory analyzes GNN's ability to learn a single relation, but in practice, GNNs are often applied to learn multiple relations. Therefore, determining the number of relations that GNNs can effectively learn for KG reasoning remains an interesting problem that can help determine the size of GNNs. Furthermore, while our experiments are conducted on synthetic datasets without missing triplets, real datasets are incomplete (e.g., missing triplets in testing sets). Thus, understanding the expressivity of GNNs for KG reasoning on incomplete datasets remains an important challenge.

\end{document}